\documentclass[11pt]{article}

\usepackage[margin=1in]{geometry}
\usepackage{amsmath,amssymb,amsfonts,amsthm}
\usepackage{booktabs}
\usepackage{tabularx}
\usepackage{array}
\usepackage{enumitem}
\usepackage{xcolor}
\usepackage{microtype}
\usepackage[numbers,sort&compress]{natbib}
\usepackage{hyperref}
\usepackage[nameinlink,capitalize,noabbrev]{cleveref}

\hypersetup{
    colorlinks=true,
    linkcolor=blue!60!black,
    citecolor=green!40!black,
    urlcolor=blue!70!black
}

\newtheorem{theorem}{Theorem}[section]
\newtheorem{proposition}[theorem]{Proposition}
\newtheorem{corollary}[theorem]{Corollary}
\newtheorem{assumption}[theorem]{Assumption}
\newtheorem{definition}[theorem]{Definition}
\newtheorem{remark}[theorem]{Remark}

\newcommand{\E}{\mathbb{E}}

\newcommand{\cA}{\mathcal{A}}
\newcommand{\cD}{\mathcal{D}}
\newcommand{\cE}{\mathcal{E}}
\newcommand{\cF}{\mathcal{F}}
\newcommand{\cH}{\mathcal{H}}
\newcommand{\cI}{\mathcal{I}}
\newcommand{\cL}{\mathcal{L}}
\newcommand{\cM}{\mathcal{M}}
\newcommand{\cO}{\mathcal{O}}
\newcommand{\cP}{\mathcal{P}}
\newcommand{\cQ}{\mathcal{Q}}
\newcommand{\cS}{\mathcal{S}}
\newcommand{\cT}{\mathcal{T}}
\newcommand{\cW}{\mathcal{W}}
\newcommand{\cX}{\mathcal{X}}
\newcommand{\cY}{\mathcal{Y}}
\newcommand{\cZ}{\mathcal{Z}}

\newcommand{\PiPol}{\boldsymbol{\Pi}}
\newcommand{\EnvClass}{\mathfrak{M}}
\newcommand{\CapClass}{\mathfrak{C}}
\newcommand{\LearnClass}{\mathfrak{L}}
\newcommand{\Kern}{\mathfrak{K}}

\newcommand{\pimu}{\pi_{\mu}}

\newcommand{\TV}{\operatorname{TV}}
\newcommand{\KL}{D_{\mathrm{KL}}}
\newcommand{\doop}{\operatorname{do}}
\newcommand{\argmaxop}{\operatorname*{arg\,max}}
\newcommand{\argminop}{\operatorname*{arg\,min}}
\newcommand{\Ability}{\operatorname{Ability}}
\newcommand{\Reg}{\operatorname{Reg}}
\newcommand{\indep}{\perp\!\!\!\perp}

\title{
\textbf{On the Capability Separation Between World-Model Policy Learning
and Imitated World-Action Models}
}

\author{
Yang Yu\\
Nanjing University\\
\texttt{yuy@nju.edu.cn}
}

\date{}

\begin{document}

\maketitle

% ============================================================
% Abstract
% ============================================================
\begin{abstract}
World-action models predict a future world outcome and then infer an action
associated with that outcome. This factorization differs from direct action
prediction and may provide practical benefits through video pretraining,
temporal supervision, and future-conditioned representations. It remains
unclear, however, whether these benefits imply a stronger control capability
when both systems are trained only from the same observational
demonstrations.

We study this question at the level of controller classes and population
learning targets. A population objective is an expectation under the true
demonstration distribution rather than an empirical average over a finite
dataset. The main analysis therefore abstracts away from finite-sample
estimation error, optimization error, and model misspecification. We define
the control-capability class of a policy class through the closed-loop
trajectory distributions that its members can induce.

We compare a direct behavior-cloning policy $\pi_{\mathrm{A}}$, an imitated
world-action policy $\pi_{\mathrm{WA}}$, and a policy
$\pi_{\mathrm{WM}}$ optimized using an action-conditioned world model. Every
world-action controller can be flattened into a direct stochastic policy
with the same closed-loop behavior. Under unrestricted stochastic-kernel
classes, the two architectures consequently have the same external
control-capability class. Moreover, under realizability, exact population
optimization, common deployment information, and distribution-preserving
world-action deployment,
\[
    \pi_{\mathrm{WA}}^*(a\mid h)
    =
    \pi_{\mathrm{A}}^*(a\mid h)
    =
    \pi_\mu(a\mid h),
\]
where $\pi_\mu(a\mid h):=p_\mu(a\mid h)$ is the observational
behavior-action conditional. Under an additional behavior-sufficiency
condition, these policies induce the same closed-loop trajectory distribution
as the demonstrator.

World-model policy learning differs in its decision rule and, when
observational identification fails, in the information required to select a
policy. It predicts the consequences of specified candidate actions,
\[
    P(Y\mid H,\doop(A=a)),
\]
and compares those consequences through a control objective. We establish an
irreducible action-specific prediction gap for future predictors that do not
condition on the current candidate action. We then characterize when an exact
world-action joint determines a forward model and show why causal
identification and action-clamping deployment remain separate requirements.
Finally, we construct an environment family in which every observational
learner has positive worst-case regret, whereas one informative action
intervention permits zero regret.

The relevant distinction is therefore not whether a controller predicts the
future, but whether it identifies and compares futures under specified
alternative actions.
\end{abstract}

% ============================================================
\section{Introduction}
\label{sec:introduction}
% ============================================================

Future prediction has become an important component of embodied
decision-making systems. Instead of mapping the current observation,
instruction, and interaction history directly to an action, a model may first
predict a future image, video, state, or latent representation and then infer
an action associated with that future. Such world-action models can exploit
large-scale video data, dense temporal supervision, and representations of
geometry, contact, and motion
\cite{tian2025pidm,ye2026wam}.

This development raises a basic question: if a world-action model and a
direct action model are trained only from the same observational
demonstrations, does future prediction give the world-action model a stronger
control capability, or does it provide a different factorization for learning
the same behavior policy?

Several effects are easily conflated in empirical comparisons. A world-action
system may use additional pretraining data, receive future-frame supervision,
have a more suitable inductive bias, or be easier to optimize. It may
therefore outperform a direct behavior-cloning baseline even if both methods
have the same ideal action target. Conversely, a system that predicts the
consequences of \emph{specified candidate actions} uses a decision interface
that standard behavior-reproduction deployment does not provide. When those
action effects are not observationally identified, it also requires
additional information, causal assumptions, or interventions. Improved task
performance alone does not distinguish among these possibilities.

A direct behavior-cloning policy predicts the next action $A$ from the
controller's available history $H$:
\[
    H\longrightarrow A.
\]
We denote this policy by $\pi_{\mathrm{A}}$. Under a realizable population
objective, it recovers the observational behavior-action conditional
$p_\mu(A\mid H)$. Here, a population objective is an expected loss under the
true demonstration distribution rather than an empirical average over a
finite dataset. Many vision-language-action systems are trained primarily
through behavior-cloning objectives, but the term
``vision-language-action'' describes an interface or architecture and does
not by itself determine the learning objective. The claims in this paper
concern systems trained through observational behavior reproduction.

An imitated world-action model uses a different factorization:
\[
    H\longrightarrow Y\longrightarrow A,
\]
where $Y$ is a future observation, state, video chunk, or latent outcome. We
denote its externally executed policy by $\pi_{\mathrm{WA}}$. When the model
is trained to fit observational tuples $(H,A,Y)$, its future component models
a future under the behavior distribution, and its inverse component predicts
an action associated with that future. The future variable can provide useful
training supervision even though it is not yet observed when the action must
be selected at deployment.

World-model policy learning uses future prediction differently. An
action-conditioned world model represents
\[
    (H,A)\longrightarrow Y
\]
and is intended to answer:
\begin{quote}
What outcome would occur if the controller selected this candidate action?
\end{quote}
Formally, $\doop(A=a)$ denotes an intervention that externally sets the
current action to $a$. A planner, value estimator, verifier, or policy
optimizer can compare the predicted consequences of alternative actions and
produce a policy $\pi_{\mathrm{WM}}$. This is the decision mechanism
underlying model-based reinforcement learning and model-predictive control
\cite{luo2024mbrlsurvey}.

The three paradigms can be summarized as
\[
\begin{array}{lll}
\pi_{\mathrm{A}}:
& H\rightarrow A,
& \text{direct behavior reproduction},\\[0.25em]
\pi_{\mathrm{WA}}:
& H\rightarrow Y\rightarrow A,
& \text{structured behavior reproduction},\\[0.25em]
\pi_{\mathrm{WM}}:
& (H,A)\rightarrow Y\rightarrow \text{utility},
& \text{action comparison and optimization}.
\end{array}
\]

All three ultimately execute stochastic mappings from histories to actions.
Their distinction therefore cannot be established merely by comparing their
final interfaces. A comparison between trained systems mixes at least four
questions:

\begin{enumerate}[leftmargin=2em]
    \item \textbf{External expressivity:}
    which closed-loop behaviors can the controller architecture represent?

    \item \textbf{Population target:}
    which conditional distribution minimizes the expected training objective
    under the true demonstration distribution?

    \item \textbf{Information and identification:}
    which action effects are determined by the information available during
    training?

    \item \textbf{Learning error:}
    how do finite data, restricted model classes, optimization, and
    computation affect the learned controller?
\end{enumerate}

This paper focuses on the first three questions. We compare policy and learner
classes rather than the test errors of particular finite neural networks. In
the main exact results, the model classes contain the relevant true
conditionals and the population objectives are optimized exactly. The
approximate result is a sensitivity statement rather than a finite-sample
learning bound.

We define a control-capability class through the closed-loop trajectory
distributions representable by a policy class. We separately define
observational and interventional learner classes according to the information
available to the learning rule. The distinction established below is always
one of deployment and decision rule and, when observational identification
fails, also one of available information. It is not a separation in
unrestricted external policy expressivity.

\paragraph{Main results.}
The analysis gives five main conclusions.

\begin{enumerate}[leftmargin=2em]
    \item \textbf{Architecture-level equivalence.}
    Every distributional world-action controller induces a direct stochastic
    action kernel with exactly the same closed-loop trajectory distribution.
    Conversely, every direct stochastic policy has a degenerate world-action
    representation. Direct policies and world-action controllers therefore
    have the same external control-capability class when both range over
    unrestricted stochastic kernels.

    \item \textbf{Population equivalence under observational imitation.}
    Under realizability, exact population optimization, common observational
    data, and distribution-preserving deployment,
    \[
        \pi_{\mathrm{WA}}^*(a\mid h)
        =
        \pi_{\mathrm{A}}^*(a\mid h)
        =
        \pimu(a\mid h)
    \]
    for $p_\mu(H)$-almost every history, where
    $\pimu(a\mid h):=p_\mu(a\mid h)$. Under behavior sufficiency, this
    equality extends to complete closed-loop trajectory distributions. We
    also give an approximate version that bounds trajectory-level
    disagreement in terms of errors in future prediction, inverse-action
    prediction, and direct action prediction.

    \item \textbf{Action-specific prediction separation.}
    A future predictor that does not condition on the current candidate
    action must average over candidate-action branches. We characterize its
    irreducible interventional prediction error by
    \[
        I_{\nu,\rho}^{\mathrm{int}}(A;Y\mid H),
    \]
    which is positive whenever alternative actions induce different outcome
    distributions on the evaluation support.

    \item \textbf{Separation of representation, identification, and
    deployment.}
    An exact world-action joint determines the observational conditional
    $p_\mu(Y\mid H,A)$ wherever the behavior distribution has positive action
    support. Under consistency, conditional exchangeability, and positivity,
    this conditional equals
    \[
        P(Y\mid H,\doop(A)).
    \]
    The identified joint can then support action comparison if candidate
    actions are explicitly clamped. Standard future-then-inverse deployment,
    however, marginalizes the same joint and recovers the behavior policy.

    \item \textbf{Decision and information separation.}
    Observational demonstrations do not identify unsupported action effects
    and need not identify action effects in the presence of hidden
    confounding. When an exact interventional model is available, model-based
    optimization can improve beyond a suboptimal demonstrator. Moreover, we
    construct a two-action environment family in which every observational
    population learner has positive worst-case regret, whereas one
    informative action intervention permits zero regret.
\end{enumerate}

These results locate the difference among the three paradigms. Direct
behavior cloning and standard world-action imitation differ in their
internal factorization but not in their unrestricted external policy class or
population action target. World-model policy learning differs in its
deployment rule and, when action effects are not observationally identified,
in the information required for policy selection. The practical benefits of
world-action modeling through representation learning, temporal supervision,
and finite-sample efficiency remain compatible with this population-level
equivalence.

\paragraph{Organization.}
\Cref{sec:related-work} reviews the most closely related work.
\Cref{sec:framework} defines the environment, population objectives, policy
classes, learner classes, and assumptions.
\Cref{sec:equivalence} establishes the architecture-level and
population-level equivalence of direct behavior cloning and imitated
world-action modeling. \Cref{sec:separation} formalizes the interventional
identification and decision requirements of world-model policy learning.
\Cref{sec:limitations-conclusion} discusses limitations and concludes.
Proofs and additional technical remarks are collected in the appendix.

% ============================================================
\section{Related Work}
\label{sec:related-work}
% ============================================================

The terms behavior cloning, world-action modeling, world modeling, and
model-based control are sometimes applied to systems with overlapping
architectural components. The distinction relevant here is whether future
prediction is used to reproduce observational behavior or to compare the
consequences of specified actions.

\subsection{Behavior cloning and imitation learning}

Behavior cloning estimates a demonstrator's conditional action distribution
from observational trajectories. A central sequential difficulty is that
action errors can move the learned policy to histories poorly covered by the
demonstrations \cite{ross2011dagger}. This has motivated trajectory-level
analyses of imitation error \cite{xu2020error,xu2022errorbounds}, imitation
with misspecified simulators \cite{jiang2020misspecified}, and learning from
imperfect demonstrations \cite{li2023imperfection,cao2024preference}.

These works primarily study finite-sample error, optimization, and
distribution shift. Our question is complementary: before those errors are
introduced, does an observational future variable change the population
action target or the externally realizable policy class?

\subsection{World-action models and world-model control}

World-action architectures use future prediction as an intermediate
representation for action inference
\cite{tian2025pidm,ye2026wam}. This factorization may exploit video
pretraining and temporal structure even when the final objective remains
behavior reproduction.

World models are commonly used for planning, policy optimization, value
expansion, and imagined experience \cite{luo2024mbrlsurvey}. Work on
model--Bellman inconsistency and reward-consistent dynamics emphasizes that
predictive accuracy and decision-relevant accuracy need not coincide
\cite{sun2023modelbellman,luo2024rewardconsistent}. Policy-conditioned and
long-horizon models further study how predictive structure interacts with
control \cite{chen2024policyconditioned,lin2025anystep}.

In visual control, future models have been used for viewpoint-invariant
prediction and world-model-based policy learning
\cite{pang2025viewinvariant,zhang2026practicalwm}. These systems illustrate
that future prediction can enter a controller through different decision
rules. Our analysis isolates the distinction between using a future as an
observational latent variable and using action-conditioned futures for
optimization.

\subsection{Causal identification and offline model-based control}

An observational conditional
\[
    p(Y\mid H,A)
\]
is not automatically equal to
\[
    P(Y\mid H,\doop(A)).
\]
The equality requires causal identification conditions such as consistency,
conditional exchangeability, and positivity
\cite{pearl2009causality,dehaan2019causal}.

Invariant action-effect models, identifiable world-model factorizations, and
counterfactual environment models seek to recover stable or causal action
effects
\cite{zhu2022invariant,chen2023counterfactual,
liu2023identifiable,zhu2025causalwm}. Model-predictive control from
observational and interventional data likewise depends on the distinction
between correlation and intervention \cite{mou2024observational}.

Offline model-based reinforcement learning faces a related support problem:
an optimized policy may select actions whose consequences are poorly
represented in the dataset
% \cite{chen2023adaptable,ran2023dataset}.  We characterize what follows from the observational
% world-action factorization
% \[
%     p(Y\mid H)p(A\mid H,Y)
% \]
% and distinguish the information represented by that joint from the decision
% rule used at deployment.

% ============================================================
\section{Framework and Scope}
\label{sec:framework}
% ============================================================

This section defines the sequential environment, demonstration distribution,
population objectives, controller classes, learner classes, and assumptions
used in the comparison.

The total-variation distance between probability measures $P$ and $Q$ is
\[
    \TV(P,Q)
    :=
    \sup_B |P(B)-Q(B)|,
\]
where the supremum is over measurable events. The Kullback--Leibler
divergence is denoted by
\[
    \KL(P\Vert Q)
    :=
    \int
    \log\left(\frac{dP}{dQ}\right)dP
\]
when $P$ is absolutely continuous with respect to $Q$, and is $+\infty$
otherwise.

We use density notation such as $p(y\mid h)$ for readability. For discrete
variables, integrals should be read as sums. For continuous variables, the
displayed expressions are interpreted using densities with respect to fixed
reference measures or, more generally, regular conditional probabilities.

\subsection{Sequential environment and trajectories}

Consider a finite-horizon partially observed controlled process
\[
    \cM
    =
    (\cS,\cO,\cA,P,\Omega,\rho_0,T),
\]
where $\cS$, $\cO$, and $\cA$ are standard Borel spaces. The initial state,
observations, and controlled transitions satisfy
\[
    S_0\sim\rho_0,
    \qquad
    O_t\sim\Omega(\cdot\mid S_t),
    \qquad
    S_{t+1}\sim P(\cdot\mid S_t,A_t).
\]

At time $t$, the controller receives
\begin{equation}
    H_t
    =
    (G,O_0,A_0,\ldots,A_{t-1},O_t)
    \in\cH_t,
    \label{eq:history}
\end{equation}
where $G$ may contain an instruction, goal, task identifier, or other
deployment-time context. The context may be fixed or sampled from a task
distribution included in the initial distribution. Time may be included in
$H_t$, and we write
\[
    \cH
    =
    \bigcup_{t=0}^{T-1}\cH_t.
\]

Let $\Kern(\cX\mid\cZ)$ denote the set of stochastic kernels from $\cZ$ to
$\cX$. A policy is an element
\[
    \pi\in\Kern(\cA\mid\cH).
\]

A complete trajectory is
\[
    \tau
    =
    (G,O_0,A_0,\ldots,A_{T-1},O_T)
    \in\cT_T.
\]
The trajectory distribution induced by policy $\pi$ in environment $\cM$ is
denoted by
\[
    P_{\cM}^{\pi}\in\cP(\cT_T),
\]
where $\cP(\cT_T)$ is the set of probability measures on the trajectory
space.

\subsection{Demonstrations, future labels, and population objectives}

At a demonstrated decision point, let

\begin{itemize}[leftmargin=2em]
    \item $H$ be the history available to the learned controller;
    \item $A$ be the next action or action chunk;
    \item $Y\in\cY$ be a future observation, state, latent, or video chunk.
\end{itemize}

The future label is available in a recorded trajectory but is not observed
when the action must be selected at deployment. It may be defined by a
measurable trajectory map
\[
    Y_t=\psi_t(\tau).
\]
Examples include $O_{t+1}$, a future image sequence, a latent future
representation, or the next $K$ observations.

Let $\mu$ denote the demonstration-generating mechanism. At each time step,
it induces
\[
    p_{\mu,t}(h,a,y)
    =
    d_{\mu,t}(h)p_{\mu,t}(a,y\mid h),
\]
where $d_{\mu,t}$ is the distribution of histories visited by the
demonstration process at time $t$.

When the training objective combines decision points from different time
steps, define
\begin{equation}
    p_\mu(h,a,y)
    =
    \sum_{t=0}^{T-1}
    w_t p_{\mu,t}(h,a,y),
    \qquad
    w_t>0,
    \qquad
    \sum_{t=0}^{T-1}w_t=1.
    \label{eq:aggregated-demo-distribution}
\end{equation}
Because time may be included in $H$, this mixture preserves the relevant
time-dependent conditionals.

\begin{definition}[Population risk and population optimum]
\label{def:population-risk}
Let $Z$ be a training example with true distribution $p_\mu$, let
$q\in\cQ$ be a model, and let $\ell(q;Z)$ be its loss. The population risk is
\begin{equation}
    \cL(q)
    :=
    \E_{Z\sim p_\mu}
    [\ell(q;Z)].
    \label{eq:population-risk}
\end{equation}
A population optimum is any
\begin{equation}
    q^*
    \in
    \argminop_{q\in\cQ}
    \cL(q).
    \label{eq:population-optimum}
\end{equation}
\end{definition}

A population objective uses the underlying distribution rather than a finite
sample. For a dataset $\cD_n=\{Z_i\}_{i=1}^n$, the corresponding empirical
risk is
\[
    \widehat{\cL}_n(q)
    =
    \frac{1}{n}
    \sum_{i=1}^n
    \ell(q;Z_i).
\]

A conditional model is exact at the population optimum if it equals the true
conditional distribution for almost every input under the relevant marginal
of $p_\mu$. Equality almost everywhere allows disagreement only on sets of
probability zero.

Define the behavior-action conditional visible to the learner by
\begin{equation}
    \pimu(a\mid h)
    :=
    p_\mu(a\mid h).
    \label{eq:visible-behavior-policy}
\end{equation}
If the demonstrator selects actions using only $H$, then $\pimu$ is the
actual demonstrator policy. If the demonstrator uses private information
omitted from $H$, then $\pimu$ is only the observational action conditional
from the learner's perspective.

\subsection{Control-capability and learner classes}

Let $\EnvClass$ be a family of environments sharing the same history and
action interfaces.

\begin{definition}[Control equivalence]
Two policies $\pi$ and $\pi'$ are control-equivalent over
$(\EnvClass,T)$, written
\[
    \pi\equiv_{\EnvClass,T}\pi',
\]
if
\begin{equation}
    P_{\cM}^{\pi}
    =
    P_{\cM}^{\pi'}
    \qquad
    \text{for every }\cM\in\EnvClass.
    \label{eq:class-control-equivalence}
\end{equation}
\end{definition}

Let $[\pi]_{\EnvClass,T}$ denote the equivalence class containing $\pi$.

\begin{definition}[Control-capability class]
For a policy class $\Pi\subseteq\Kern(\cA\mid\cH)$, define
\begin{equation}
    \CapClass_T(\Pi;\EnvClass)
    :=
    \left\{
        [\pi]_{\EnvClass,T}
        :
        \pi\in\Pi
    \right\}.
    \label{eq:capability-class}
\end{equation}
\end{definition}

The control-capability class records the externally distinguishable
closed-loop behaviors representable by $\Pi$. It does not characterize
sample efficiency, optimization difficulty, parameter count, or utility on a
particular task.

Controller classes must also be distinguished from learner classes. A
controller class describes which policies can be represented; a learner class
describes the information available for selecting one of those policies.

\begin{definition}[Information-restricted population learners]
Let $U:\cT_T\rightarrow[0,1]$ be a known decision objective. An observational
population learner is a mapping
\[
    L_{\mathrm{obs}}:
    \bigl(
        p_\mu(H,A,Y),U
    \bigr)
    \longmapsto
    \pi\in\Pi
\]
that receives the true observational distribution and the utility but no
outcomes generated under additional action interventions. The class of such
learners is denoted by
\[
    \LearnClass_{\mathrm{obs}}(\Pi).
\]

An interventional population learner additionally receives data
$\cD_{\mathrm{int}}$ containing outcomes under specified action
interventions:
\[
    L_{\mathrm{int}}:
    \bigl(
        p_\mu(H,A,Y),
        \cD_{\mathrm{int}},
        U
    \bigr)
    \longmapsto
    \pi\in\Pi.
\]
The corresponding class is denoted by
\[
    \LearnClass_{\mathrm{int}}(\Pi).
\]
\end{definition}

The environment family, utility, and candidate policy class are common to
both learner classes. Their difference is the observational or
interventional information available for selecting a policy.

\subsection{Scope of the comparison}

\begin{assumption}[Population-level comparison]
\label{assumption:population-scope}
Unless otherwise stated, the main equivalence results use the following
conditions:

\begin{enumerate}[label=(\roman*),leftmargin=2em]
    \item \textbf{Common deployment information.}
    The compared policies receive the same history $H$.

    \item \textbf{Common observational data.}
    Direct behavior cloning and world-action training use trajectories from
    the same observational process. The world-action learner may use future
    labels $Y$ contained in those trajectories but receives no additional
    action interventions.

    \item \textbf{Realizability.}
    The relevant policy and conditional-distribution classes contain their
    population targets.

    \item \textbf{Exact population optimization.}
    The population objectives are globally minimized.

    \item \textbf{Distribution-preserving deployment.}
    A world-action controller samples from, or exactly marginalizes over, its
    learned future distribution. Nonlinear point decoding is excluded unless
    explicitly discussed.
\end{enumerate}
\end{assumption}

These conditions remove finite-sample estimation error, approximation error,
and optimization error from the primary comparison. Approximation parameters
introduced later are used only to describe sensitivity of the exact result.

\begin{assumption}[Behavior sufficiency]
\label{assumption:behavior-sufficiency}
Whenever $\pimu$ is identified with the data-generating demonstrator and
equality with the demonstrator's trajectory distribution is claimed, the
demonstration action mechanism is a stochastic kernel of $H$ alone:
\begin{equation}
    A_t
    \sim
    \pimu(\cdot\mid H_t).
    \label{eq:behavior-sufficiency}
\end{equation}
After conditioning on $H_t$, no additional demonstrator-only variable affects
action selection.
\end{assumption}

Behavior sufficiency is not required for the probability identity
\[
    \int
    p_\mu(y\mid h)p_\mu(a\mid h,y)\,dy
    =
    p_\mu(a\mid h).
\]
It is required when $p_\mu(a\mid h)$ is interpreted as the actual policy that
generated the full demonstration trajectory distribution.

\subsection{The three policy-learning paradigms}

\subsubsection{Direct behavior cloning}

Let
\[
    \Pi_{\mathrm{A}}
    \subseteq
    \Kern(\cA\mid\cH)
\]
be a direct policy class. Its population behavior-cloning objective is
\begin{equation}
    \cL_{\mathrm{A}}(\pi_{\mathrm{A}})
    =
    \E_{(H,A)\sim p_\mu}
    \left[
        -\log\pi_{\mathrm{A}}(A\mid H)
    \right].
    \label{eq:bc-loss}
\end{equation}

If $\pimu\in\Pi_{\mathrm{A}}$ and the objective is optimized exactly, then
\begin{equation}
    \pi_{\mathrm{A}}^*(a\mid h)
    =
    p_\mu(a\mid h)
    =
    \pimu(a\mid h)
    \label{eq:bc-target}
\end{equation}
for $p_\mu(H)$-almost every history.

\subsubsection{Imitated world-action policies}

Let
\[
    \cF
    \subseteq
    \Kern(\cY\mid\cH)
\]
be a class of future predictors, and let
\[
    \cI
    \subseteq
    \Kern(\cA\mid\cH\times\cY)
\]
be a class of inverse-action predictors. The future predictor does not
condition on the current candidate action, although $H$ may contain past
actions.

The corresponding world-action joint class is
\begin{equation}
    \cQ_{\mathrm{WA}}
    =
    \left\{
        q_{\mathrm{WA}}(y,a\mid h)
        =
        q_F(y\mid h)q_I(a\mid h,y)
        :
        q_F\in\cF,\ q_I\in\cI
    \right\}.
    \label{eq:wam-joint-class}
\end{equation}

This factorization also covers architectures in which the future and action
are generated by one network, provided that the induced joint distribution
admits the displayed conditionals.

A natural observational objective is
\begin{equation}
\begin{split}
    \cL_{\mathrm{WA}}(q_F,q_I)
    =
    \E_{(H,A,Y)\sim p_\mu}
    \left[
        -\log q_F(Y\mid H)
        -
        \log q_I(A\mid H,Y)
    \right].
\end{split}
\label{eq:wam-loss}
\end{equation}

Under distribution-preserving deployment, the executed policy is
\begin{equation}
    \pi_{\mathrm{WA}}(a\mid h)
    =
    \int_{\cY}
    q_F(y\mid h)q_I(a\mid h,y)\,dy.
    \label{eq:wam-policy}
\end{equation}

The external policy class induced by $(\cF,\cI)$ is
\begin{equation}
\begin{split}
    \Pi_{\mathrm{WA}}(\cF,\cI)
    :=
    \left\{
        \pi:
        \pi(a\mid h)
        =
        \int
        q_F(y\mid h)q_I(a\mid h,y)\,dy,\right.\\
    \left.
        q_F\in\cF,\ q_I\in\cI
    \right\}.
\end{split}
\label{eq:wam-induced-class}
\end{equation}

The joint class $\cQ_{\mathrm{WA}}$ describes internal future-action
representations, whereas $\Pi_{\mathrm{WA}}(\cF,\cI)$ describes externally
observable control behavior.

We use
\begin{equation}
\begin{split}
    \Pi_{\mathrm{WA}}^{\mathrm{all}}
    :=
    \Pi_{\mathrm{WA}}
    \left(
        \Kern(\cY\mid\cH),
        \Kern(\cA\mid\cH\times\cY)
    \right)
\end{split}
\label{eq:unrestricted-wam-class}
\end{equation}
for the induced policy class when both components range over all admissible
stochastic kernels.

\subsubsection{World-model policy learning}

For a specified candidate action $a$, define the interventional outcome
kernel by
\begin{equation}
    \mathsf{T}_a(B\mid h)
    :=
    P_{\cM}
    \bigl(
        Y\in B
        \mid
        H=h,\doop(A=a)
    \bigr),
    \qquad
    B\in\mathcal{B}(\cY),
    \label{eq:interventional-kernel}
\end{equation}
where $\mathcal{B}(\cY)$ denotes the measurable subsets of $\cY$. When the
kernel appears inside an integral, we write
\[
    \mathsf{T}_a(dy\mid h).
\]
Thus, for any bounded measurable function $f$,
\[
    \E\left[
        f(Y)
        \mid
        H=h,\doop(A=a)
    \right]
    =
    \int_{\cY}
    f(y)\mathsf{T}_a(dy\mid h).
\]

Let
\[
    \widehat{\mathsf{T}}
    \in
    \cW_Y
    \subseteq
    \Kern(\cY\mid\cH\times\cA)
\]
be a learned action-conditioned outcome model, where
\[
    \widehat{\mathsf{T}}(B\mid h,a)
\]
approximates $\mathsf{T}_a(B\mid h)$. For a one-step decision problem, $Y$
may be any future variable sufficient to evaluate a bounded utility
$r(h,a,Y)$. The model-based action value is then
\begin{equation}
    \widehat{Q}(h,a)
    :=
    \int_{\cY}
    r(h,a,y)
    \widehat{\mathsf{T}}(dy\mid h,a).
    \label{eq:one-step-model-value}
\end{equation}

For finite-horizon planning, let
\begin{equation}
    \mathsf{K}_{\cM}(C\mid h,a)
    :=
    P_{\cM}
    \bigl(
        H_{t+1}\in C
        \mid
        H_t=h,\doop(A_t=a)
    \bigr),
    \qquad
    C\in\mathcal{B}(\cH),
    \label{eq:true-next-history-kernel}
\end{equation}
denote the true controlled next-history kernel. Under an integral, we write
\[
    \mathsf{K}_{\cM}(dh'\mid h,a).
\]

Let
\[
    \widehat{\mathsf{K}}
    \in
    \cW_H
    \subseteq
    \Kern(\cH\mid\cH\times\cA)
\]
be a learned next-history model. Let $\eta_0$ be the initial-history
distribution. A policy $\pi$ and model $\widehat{\mathsf{K}}$ induce a
rollout according to
\[
    H_0\sim\eta_0,
    \qquad
    A_t\sim\pi(\cdot\mid H_t),
    \qquad
    H_{t+1}
    \sim
    \widehat{\mathsf{K}}(\cdot\mid H_t,A_t).
\]
Denote the resulting trajectory distribution by
$P_{\widehat{\mathsf{K}}}^{\pi}$. For a bounded trajectory utility
$U:\cT_T\rightarrow[0,1]$, define
\begin{equation}
    J_{\widehat{\mathsf{K}},U}^{T}(\pi)
    :=
    \E_{\tau\sim P_{\widehat{\mathsf{K}}}^{\pi}}
    [U(\tau)].
    \label{eq:model-utility}
\end{equation}

Given a candidate policy class $\PiPol$, a world-model policy satisfies
\begin{equation}
    \pi_{\mathrm{WM}}
    \in
    \argmaxop_{\pi\in\PiPol}
    J_{\widehat{\mathsf{K}},U}^{T}(\pi).
    \label{eq:wm-policy}
\end{equation}
Whenever an $\argmaxop$ is displayed, we assume that the maximum is attained.

\begin{table}[t]
\centering
\small
\caption{The three policy-learning paradigms.}
\label{tab:three-policies}
\begin{tabularx}{\textwidth}{
    >{\raggedright\arraybackslash}p{0.12\textwidth}
    >{\raggedright\arraybackslash}p{0.23\textwidth}
    >{\raggedright\arraybackslash}p{0.25\textwidth}
    >{\raggedright\arraybackslash}X
}
\toprule
Policy & Learned object & Population target & Decision principle\\
\midrule
$\pi_{\mathrm{A}}$ &
$\pi_{\mathrm{A}}(a\mid h)$ &
$p_\mu(a\mid h)$ &
Reproduce demonstrated actions\\[0.4em]

$\pi_{\mathrm{WA}}$ &
$q_F(y\mid h)q_I(a\mid h,y)$ &
$p_\mu(y,a\mid h)$ &
Generate a future under the behavior distribution and decode its associated
action\\[0.4em]

$\pi_{\mathrm{WM}}$ &
$\widehat{\mathsf{T}}(dy\mid h,a)$ or
$\widehat{\mathsf{K}}(dh'\mid h,a)$ &
An interventional outcome or transition kernel, when identified &
Compare candidate actions through predicted consequences and a utility\\
\bottomrule
\end{tabularx}
\end{table}

An observational predictor $p(Y\mid H,A)$ is not automatically an
interventional world model. Interpreting it as
$P(Y\mid H,\doop(A))$ requires sufficient action support and a valid causal
identification argument.

% ============================================================
\section{Observational Equivalence of Direct and World-Action Policies}
\label{sec:equivalence}
% ============================================================

We first ask whether the internal future variable enlarges the externally
observable controller class. We then compare the policies selected by the
direct and world-action population objectives.

\subsection{A decision-complete control metric}

Equality under one benchmark reward is too weak to establish general control
equivalence. We instead compare complete trajectory distributions.

\begin{definition}[Control ability]
For a measurable utility $U:\cT_T\rightarrow[0,1]$, define
\begin{equation}
    \Ability_{\cM,U}^{T}(\pi)
    :=
    J_{\cM,U}^{T}(\pi)
    :=
    \E_{\tau\sim P_{\cM}^{\pi}}
    [U(\tau)].
    \label{eq:ability}
\end{equation}
\end{definition}

\begin{definition}[Decision-complete control distance]
For two policies in a fixed environment $\cM$, define
\begin{equation}
\begin{split}
    d_{\mathrm{ctrl}}^{\cM,T}(\pi,\pi')
    :=
    \sup_{U:\cT_T\rightarrow[0,1]}
    \left|
        J_{\cM,U}^{T}(\pi)
        -
        J_{\cM,U}^{T}(\pi')
    \right|.
\end{split}
\label{eq:control-distance}
\end{equation}
\end{definition}

\begin{proposition}[Trajectory characterization]
\label{prop:control-tv}
\begin{equation}
    d_{\mathrm{ctrl}}^{\cM,T}(\pi,\pi')
    =
    \TV\left(
        P_{\cM}^{\pi},
        P_{\cM}^{\pi'}
    \right).
    \label{eq:control-tv}
\end{equation}
\end{proposition}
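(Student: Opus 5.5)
The plan is to reduce the statement to the standard variational characterization of total variation over $[0,1]$-valued test functions. Write $P:=P_{\cM}^{\pi}$ and $Q:=P_{\cM}^{\pi'}$. Both are probability measures on the same measurable trajectory space $\cT_T$, and for each measurable $U$ we have $J_{\cM,U}^{T}(\pi)-J_{\cM,U}^{T}(\pi')=\int U\,d(P-Q)$. The task is therefore to show
\[
    \sup_{U:\cT_T\rightarrow[0,1]}\left|\int U\,d(P-Q)\right|=\sup_B|P(B)-Q(B)|,
\]
where $U$ ranges over measurable functions. The proof splits into two inequalities.

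\textbf{Lower bound.} For any measurable event $B$, take $U=\mathbf{1}_B$, which is a measurable $[0,1]$-valued utility. Then $|J_{\cM,U}^{T}(\pi)-J_{\cM,U}^{T}(\pi')|=|P(B)-Q(B)|$. Taking the supremum over $B$ gives $d_{\mathrm{ctrl}}^{\cM,T}(\pi,\pi')\ge\TV(P,Q)$.

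\textbf{Upper bound.} Let $\lambda:=(P+Q)/2$, so that $P,Q\ll\lambda$. Set $p:=dP/d\lambda$ and $q:=dQ/d\lambda$, and let $B^+:=\{p>q\}$. For measurable $U$ with $0\le U\le 1$,
\[
    \int U\,(p-q)\,d\lambda
    \le
    \int_{B^+}(p-q)\,d\lambda
    =
    P(B^+)-Q(B^+)
    \le
    \TV(P,Q).
\]
For the reverse sign, apply the same argument to $1-U$, or equivalently use the set $\{q>p\}$. Because $P$ and $Q$ have equal total mass, this gives $\int U\,d(Q-P)\le Q(B^-)-P(B^-)\le\TV(P,Q)$ with $B^-:=\{q>p\}$. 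Combining the two signs bounds the absolute value, and taking the supremum over $U$ yields $d_{\mathrm{ctrl}}^{\cM,T}(\pi,\pi')\le\TV(P,Q)$.

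There is no real obstacle here. The only points needing care are, first, measurability: $B^\pm$ are measurable because the Radon--Nikodym derivatives are measurable, and $P_{\cM}^{\pi}$ is a well-defined measure on the standard Borel trajectory space by Ionescu--Tulcea. Second, the convention that $\TV$ is defined as $\sup_B|P(B)-Q(B)|$ rather than with a factor of $2$; this convention is exactly what makes the identity hold with constant one.
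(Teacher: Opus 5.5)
Your proof is correct and follows the same route as the paper, which also reduces the claim to the variational identity $\TV(P,Q)=\sup_{0\le f\le 1}|\E_P[f]-\E_Q[f]|$ with $f=U$. The only difference is that the paper cites this identity, whereas you prove it directly: indicators $U=\mathbf{1}_B$ give the lower bound, and the set $\{p>q\}$ built from Radon--Nikodym densities gives the upper bound, under the same no-factor-of-two convention the paper uses.
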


The proof is given in \cref{app:proof-control-tv}. Zero control distance means
that no bounded return, success indicator, safety statistic, verifier, or
other trajectory-level evaluation can distinguish the policies.

\subsection{Architecture-level flattening}

\begin{theorem}[World-action flattening]
\label{thm:flattening}
For every world-action controller $(q_F,q_I)$, define
\begin{equation}
    \bar{\pi}(a\mid h)
    =
    \int_{\cY}
    q_F(y\mid h)q_I(a\mid h,y)\,dy.
    \label{eq:flattened-policy}
\end{equation}
Then $\bar{\pi}\in\Kern(\cA\mid\cH)$. Let
$P_{\cM}^{(q_F,q_I)}$ denote the trajectory distribution obtained by
deploying the two-stage world-action controller. In every environment with
the same history and action interfaces,
\begin{equation}
    P_{\cM}^{(q_F,q_I)}
    =
    P_{\cM}^{\bar{\pi}}.
    \label{eq:flattened-trajectory-distribution}
\end{equation}

Consequently,
\begin{equation}
    \Pi_{\mathrm{WA}}(\cF,\cI)
    \subseteq
    \Kern(\cA\mid\cH).
    \label{eq:wam-class-inclusion}
\end{equation}

Conversely, if the world-action class permits a point-mass future and
arbitrary inverse kernels at that future, then every direct stochastic policy
has a degenerate world-action representation. Hence
\begin{equation}
\begin{split}
    \CapClass_T
    \bigl(
        \Pi_{\mathrm{WA}}^{\mathrm{all}};
        \EnvClass
    \bigr)
    =
    \CapClass_T
    \bigl(
        \Kern(\cA\mid\cH);
        \EnvClass
    \bigr).
\end{split}
\label{eq:unrestricted-class-equality}
\end{equation}
\end{theorem}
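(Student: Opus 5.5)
The proof has three parts: the flattened object is a kernel, flattening preserves closed-loop trajectory laws, and every direct policy is realized by some world-action controller.

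\emph{Kernel property.} For fixed $h$, $\bar{\pi}(\cdot\mid h)$ is the $\cA$-marginal of the probability measure $q_F(dy\mid h)\,q_I(da\mid h,y)$ on $\cY\times\cA$. That measure exists by the standard composition of kernels on standard Borel spaces, so $\bar{\pi}(\cdot\mid h)$ is a probability measure. Measurability of $h\mapsto\bar{\pi}(B\mid h)$ for each $B\in\mathcal{B}(\cA)$ holds because integrating a jointly measurable kernel against a kernel preserves measurability; one can verify this by a monotone-class argument starting from indicator and simple functions. This gives $\bar{\pi}\in\Kern(\cA\mid\cH)$ and hence the inclusion \eqref{eq:wam-class-inclusion}.

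\emph{Equality of trajectory laws.} I would write $P_{\cM}^{(q_F,q_I)}$ as the law of an extended process in which, at each step $t$, an internal variable $\tilde Y_t\sim q_F(\cdot\mid H_t)$ is drawn, then $A_t\sim q_I(\cdot\mid H_t,\tilde Y_t)$, and then the environment evolves by $P$ and $\Omega$. The key point is that $\tilde Y_t$ is internal: it is not part of $H_{t+1}$, and the environment kernels never depend on it. The internal variables are therefore not recorded in $\tau$. By Ionescu--Tulcea, the joint law of $(S_{0:T},\tilde Y_{0:T-1},\tau)$ is the iterated composition of these kernels. The $\tau$-marginal is then obtained by integrating out each $\tilde Y_t$ in turn. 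At step $t$, $\tilde Y_t$ appears only in the factor $q_F(d\tilde y_t\mid H_t)\,q_I(da_t\mid H_t,\tilde y_t)$, so Fubini replaces that factor by $\bar{\pi}(da_t\mid H_t)$. I would organize this as an induction on $t$: the laws of $(S_{0:t},H_t)$ under the two controllers coincide, and the one-step kernels from $(S_t,H_t)$ to $(S_{t+1},H_{t+1})$ coincide after marginalization. Uniqueness of the Ionescu--Tulcea measure then gives \eqref{eq:flattened-trajectory-distribution}. Only the common history and action interfaces are used, so the equality holds for every $\cM$.

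\emph{Converse.} Fix $\pi\in\Kern(\cA\mid\cH)$ and some $y_0\in\cY$. Take $q_F(\cdot\mid h)=\delta_{y_0}$ and $q_I(\cdot\mid h,y)=\pi(\cdot\mid h)$ for all $y$; both are measurable kernels. Their flattening is exactly $\pi$. Hence $\Pi_{\mathrm{WA}}^{\mathrm{all}}=\Kern(\cA\mid\cH)$ as sets of external policies, and equal policy sets give equal sets of equivalence classes, which is \eqref{eq:unrestricted-class-equality}. The forward part alone already gives the inclusion $\CapClass_T(\Pi_{\mathrm{WA}}(\cF,\cI);\EnvClass)\subseteq\CapClass_T(\Kern(\cA\mid\cH);\EnvClass)$ for restricted classes.

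The main obstacle is the induction step in the trajectory-law equality. It needs a precise definition of what deploying the two-stage controller means, and that definition must include that $\tilde Y_t$ is neither fed back into later histories nor allowed to influence $P$ or $\Omega$. Once that modeling convention is stated, the rest is measure-theoretic bookkeeping.
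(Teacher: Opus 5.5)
Your proposal is correct and follows the paper's proof: the same induction over time, showing that the two-stage controller and $\bar{\pi}$ induce the same action law at every common history and hence the same closed-loop trajectory law, and the same point-mass converse $q_F(\cdot\mid h)=\delta_{y_0}$, $q_I(\cdot\mid h,y)=\pi(\cdot\mid h)$. Your version is more careful than the paper's sketch in three places: it checks measurability of $\bar{\pi}$, it carries the hidden state $S_{0:t}$ through the induction, and it defines $q_I$ for every $y$ rather than only at $y_0$.
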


The proof is in \cref{app:proof-flattening}. The theorem shows that the
internal future variable does not enlarge the unrestricted external policy
class. It may still provide a more efficient representation within restricted
parametric classes.

\subsection{Population equivalence under observational training}

Architecture-level flattening does not determine which policy is selected by
training. We next compare the population optima of direct behavior cloning and
world-action maximum likelihood on the same observational distribution.

\begin{assumption}[Ideal observational objectives]
\label{assumption:ideal-learning}
In addition to \cref{assumption:population-scope}, assume that
\[
    \pimu\in\Pi_{\mathrm{A}},
\]
and that the world-action classes contain the true observational
conditionals:
\[
    p_\mu(Y\mid H)\in\cF,
    \qquad
    p_\mu(A\mid H,Y)\in\cI.
\]
\end{assumption}

\begin{theorem}[Population action-marginal equivalence]
\label{thm:population-equivalence}
Under \cref{assumption:ideal-learning}, every exact population optimum
satisfies
\begin{equation}
    \pi_{\mathrm{WA}}^*(a\mid h)
    =
    \pi_{\mathrm{A}}^*(a\mid h)
    =
    \pimu(a\mid h)
    \label{eq:population-equivalence}
\end{equation}
for $p_\mu(H)$-almost every history.

If \cref{assumption:behavior-sufficiency} also holds, then the three action
kernels agree at $d_{\mu,t}$-almost every history for each time $t$, and
\begin{equation}
    P_{\cM}^{\pi_{\mathrm{WA}}^*}
    =
    P_{\cM}^{\pi_{\mathrm{A}}^*}
    =
    P_{\cM}^{\pimu}.
    \label{eq:trajectory-equivalence}
\end{equation}
Consequently,
\begin{equation}
    d_{\mathrm{ctrl}}^{\cM,T}
    \bigl(
        \pi_{\mathrm{WA}}^*,
        \pi_{\mathrm{A}}^*
    \bigr)
    =0.
    \label{eq:zero-distance}
\end{equation}
\end{theorem}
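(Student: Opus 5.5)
The plan has three stages. First, identify the population optima of both objectives by the standard Gibbs-inequality argument. Second, marginalize the world-action optimum. Third, lift the per-step agreement to trajectories using \cref{thm:flattening} and \cref{prop:control-tv}.

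\emph{Direct behavior cloning.} For the objective \eqref{eq:bc-loss}, condition on $H$ and write
\[
\cL_{\mathrm{A}}(\pi_{\mathrm{A}})-\cL_{\mathrm{A}}(\pimu)
=\E_{H\sim p_\mu}\bigl[\KL\bigl(p_\mu(\cdot\mid H)\Vert \pi_{\mathrm{A}}(\cdot\mid H)\bigr)\bigr]\ge 0 .
\]
Since $\pimu\in\Pi_{\mathrm{A}}$, the minimum value is attained at $\pimu$. Any exact optimum therefore has zero expected KL, which forces $\pi_{\mathrm{A}}^*(\cdot\mid h)=\pimu(\cdot\mid h)$ for $p_\mu(H)$-a.e.\ $h$. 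Here I assume the conditional entropy is finite; otherwise I would compare risks via the KL form directly.

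\emph{World-action objective.} The loss \eqref{eq:wam-loss} separates into a term depending only on $q_F$ and a term depending only on $q_I$. Each term has the same KL decomposition:
\[
\E_H\bigl[\KL(p_\mu(Y\mid H)\Vert q_F(\cdot\mid H))\bigr]
\quad\text{and}\quad
\E_{H,Y}\bigl[\KL(p_\mu(A\mid H,Y)\Vert q_I(\cdot\mid H,Y))\bigr].
\]
Realizability then gives two a.e.\ identities. First, $q_F^*=p_\mu(Y\mid H)$ holds $p_\mu(H)$-a.e. Second, $q_I^*=p_\mu(A\mid H,Y)$ holds $p_\mu(H,Y)$-a.e.

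Under distribution-preserving deployment, \eqref{eq:wam-policy} gives
\[
\pi_{\mathrm{WA}}^*(a\mid h)=\int q_F^*(y\mid h)\,q_I^*(a\mid h,y)\,dy .
\]
For a.e.\ $h$, the set of $y$ where $q_I^*$ differs from the truth is $q_F^*(\cdot\mid h)$-null, because $q_F^*(\cdot\mid h)=p_\mu(\cdot\mid h)$ and the exceptional set is $p_\mu(H,Y)$-null (by disintegration/Fubini). The integral therefore equals $\int p_\mu(y\mid h)p_\mu(a\mid h,y)\,dy=p_\mu(a\mid h)$, which proves \eqref{eq:population-equivalence}. I expect this null-set bookkeeping to be the most delicate step. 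Making it rigorous requires regular conditional probabilities on standard Borel spaces, together with the observation that a $p_\mu(H,Y)$-null set has $p_\mu(Y\mid h)$-null sections for $p_\mu(H)$-a.e.\ $h$.

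\emph{Trajectory equivalence.} By \eqref{eq:aggregated-demo-distribution} with $w_t>0$, we have $d_{\mu,t}\ll p_\mu(H)$ restricted to $\cH_t$ (time is included in $H$). So the $p_\mu(H)$-a.e.\ equality holds $d_{\mu,t}$-a.e.\ for each $t$.

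Under \cref{assumption:behavior-sufficiency}, $P_{\cM}^{\pimu}$ is the demonstration law, and its time-$t$ history marginal is $d_{\mu,t}$. I then argue by induction on $t$ that $P_{\cM}^{\pi}$ and $P_{\cM}^{\pimu}$ agree on $(H_t,A_t)$ whenever $\pi=\pimu$ $d_{\mu,t}$-a.e. at every step. The base case is $H_0$, which has the same law under both since it does not depend on the policy. For the inductive step, suppose the laws of $H_t$ coincide and equal $d_{\mu,t}$. Then the laws of $(H_t,A_t)$ coincide, because the action kernels agree a.e.\ with respect to that common law. The laws of $H_{t+1}$ then coincide, because the environment transition and observation kernels are policy-independent.

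At $t=T$ this gives \eqref{eq:trajectory-equivalence} for both $\pi_{\mathrm{A}}^*$ and $\pi_{\mathrm{WA}}^*$. For the world-action controller, \cref{thm:flattening} identifies its deployed two-stage law with $P_{\cM}^{\pi_{\mathrm{WA}}^*}$ of the flattened kernel. Finally, \cref{prop:control-tv} turns equality of trajectory laws into \eqref{eq:zero-distance}.
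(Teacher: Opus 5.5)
Your proposal is correct and follows essentially the same route as the paper. Both arguments use the entropy-plus-expected-KL decomposition of the two log-loss objectives, the marginalization identity $\int p_\mu(y\mid h)p_\mu(a\mid h,y)\,dy=p_\mu(a\mid h)$, and induction over time using the fact that, under behavior sufficiency, $d_{\mu,t}$ is the time-$t$ history law of deploying $\pimu$. Your extra bookkeeping fills in details the paper leaves implicit: the null sections of the exceptional set for $q_I^*$, the absolute continuity $d_{\mu,t}\ll p_\mu(H)$ obtained from $w_t>0$, and the explicit appeal to \cref{prop:control-tv} for the zero control distance.
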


The proof is in \cref{app:proof-population-equivalence}. The central identity
is
\[
\begin{split}
    \pi_{\mathrm{WA}}^*(a\mid h)
    &=
    \int
    p_\mu(y\mid h)
    p_\mu(a\mid h,y)\,dy\\
    &=
    p_\mu(a\mid h).
\end{split}
\]
Thus, sampling a future under the behavior distribution and then sampling its
posterior behavior action recovers the behavior-action conditional.

The theorem concerns distribution-preserving deployment. A system that
selects a single future by MAP decoding, reranks futures with a verifier, or
otherwise modifies the learned joint may induce a different action marginal.

\begin{remark}[Privileged demonstrator information]
If the demonstrator selects actions using a variable omitted from $H$, then
\cref{eq:population-equivalence} still holds as an identity between the
learned action marginals. However, deploying
$\pimu(a\mid h)=p_\mu(a\mid h)$ need not reproduce the original
demonstration trajectory distribution because marginalizing the privileged
variable may destroy action--state dependence. Reproducing the original
demonstrator then requires an augmented history or additional assumptions.
\end{remark}

\subsection{Approximate equivalence}

The exact theorem describes the population limit. The following continuity
bound shows how discrepancies in the learned components translate into
closed-loop disagreement. It does not specify how those discrepancies depend
on sample size.

Suppose that, uniformly over the relevant deployment histories,
\begin{equation}
    \TV\bigl(
        q_F(\cdot\mid h),
        p_\mu(\cdot\mid h)
    \bigr)
    \leq
    \epsilon_F,
    \label{eq:future-tv-error}
\end{equation}
\begin{equation}
    \E_{Y\sim p_\mu(\cdot\mid h)}
    \left[
        \TV\bigl(
            q_I(\cdot\mid h,Y),
            p_\mu(\cdot\mid h,Y)
        \bigr)
    \right]
    \leq
    \epsilon_I,
    \label{eq:inverse-tv-error}
\end{equation}
and
\begin{equation}
    \TV\bigl(
        \pi_{\mathrm{A}}(\cdot\mid h),
        \pimu(\cdot\mid h)
    \bigr)
    \leq
    \epsilon_A.
    \label{eq:direct-tv-error}
\end{equation}

\begin{proposition}[Approximate control equivalence]
\label{prop:approx-equivalence}
Let
\[
    \epsilon
    =
    \min\{1,\epsilon_F+\epsilon_I+\epsilon_A\}.
\]
Then
\begin{equation}
    \TV\bigl(
        \pi_{\mathrm{WA}}(\cdot\mid h),
        \pi_{\mathrm{A}}(\cdot\mid h)
    \bigr)
    \leq
    \epsilon.
    \label{eq:action-approximation}
\end{equation}
If the same bound holds at every history that may be reached by either
policy, then
\begin{equation}
\begin{split}
    d_{\mathrm{ctrl}}^{\cM,T}
    \bigl(
        \pi_{\mathrm{WA}},
        \pi_{\mathrm{A}}
    \bigr)
    &\leq
    1-(1-\epsilon)^T\\
    &\leq
    T\epsilon.
\end{split}
\label{eq:trajectory-approximation}
\end{equation}
\end{proposition}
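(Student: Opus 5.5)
The plan has two parts: first a one-step action-level bound, then a trajectory-level coupling argument.

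For the action-level bound \eqref{eq:action-approximation}, insert the behavior conditional $\pimu(\cdot\mid h)=\int p_\mu(y\mid h)p_\mu(\cdot\mid h,y)\,dy$ as an intermediate point. This identity is the one used in \cref{thm:population-equivalence}, and it needs no realizability. Then split
\[
\pi_{\mathrm{WA}}(\cdot\mid h)-\pimu(\cdot\mid h)
=
\int \bigl(q_F(y\mid h)-p_\mu(y\mid h)\bigr)q_I(\cdot\mid h,y)\,dy
+
\int p_\mu(y\mid h)\bigl(q_I(\cdot\mid h,y)-p_\mu(\cdot\mid h,y)\bigr)\,dy .
\]
The first term is the image of the signed measure $q_F-p_\mu$ under the Markov kernel $q_I$. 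By the data-processing (contraction) property of total variation, it contributes at most $\TV(q_F(\cdot\mid h),p_\mu(\cdot\mid h))\le\epsilon_F$. For the second term, fix an event $B\subseteq\cA$ and apply Jensen's inequality: its absolute value is at most $\E_{Y\sim p_\mu(\cdot\mid h)}[\TV(q_I(\cdot\mid h,Y),p_\mu(\cdot\mid h,Y))]\le\epsilon_I$. Taking the supremum over $B$ gives $\TV(\pi_{\mathrm{WA}},\pimu)\le\epsilon_F+\epsilon_I$. The triangle inequality with \eqref{eq:direct-tv-error} then gives $\epsilon_F+\epsilon_I+\epsilon_A$. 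Since TV never exceeds $1$, we may replace this by $\epsilon=\min\{1,\cdot\}$. Measurability is fine, because \cref{thm:flattening} already guarantees $\pi_{\mathrm{WA}}\in\Kern(\cA\mid\cH)$.

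For the trajectory bound \eqref{eq:trajectory-approximation}, combine \cref{prop:control-tv} with \cref{thm:flattening}. These reduce the claim to bounding $\TV(P_\cM^{\pi_{\mathrm{WA}}},P_\cM^{\pi_{\mathrm{A}}})$ for the flattened kernel. The approach is a step-by-step maximal coupling. At each $t$, conditionally on a common history $H_t=h$, draw the two actions from a maximal coupling of $\pi_{\mathrm{WA}}(\cdot\mid h)$ and $\pi_{\mathrm{A}}(\cdot\mid h)$, so they coincide with probability at least $1-\epsilon$. Whenever the actions coincide, let the environment's next state and observation be shared. The two histories then remain identical.

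Write $E_t$ for the event that the histories agree through time $t$. Then $P(E_{t+1}\mid E_t)\ge 1-\epsilon$, and by induction $P(E_T)\ge(1-\epsilon)^T$. Because the coupling inequality bounds TV by the probability that the two trajectories differ, we obtain $\TV\le 1-(1-\epsilon)^T$. The final inequality $1-(1-\epsilon)^T\le T\epsilon$ is Bernoulli's inequality, which applies since $\epsilon\in[0,1]$.

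The main obstacle is making this coupling rigorous on standard Borel spaces. We need a measurable choice of maximal coupling depending on $h$, which exists since $\cA$ is standard Borel. We also need the per-step bound to apply on the event $E_t$. This is exactly where the hypothesis ``at every history reachable by either policy'' is used: on $E_t$ the common history is reachable by both policies. If the measurable-coupling route becomes awkward, an alternative is a telescoping/hybrid argument. Define $P_k$ as the trajectory law that runs $\pi_{\mathrm{WA}}$ for the first $k$ steps and $\pi_{\mathrm{A}}$ afterwards. Each consecutive pair differs by at most $\epsilon$, which gives the weaker bound $T\epsilon$ directly. The sharper bound $1-(1-\epsilon)^T$ would still come from the coupling.
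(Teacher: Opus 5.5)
Your proposal is correct and follows essentially the same route as the paper. Your signed-measure split through the intermediate kernel $\int p_\mu(y\mid h)\,q_I(\cdot\mid h,y)\,dy$ is the action-marginal form of the paper's triangle inequality through the joint $R_h$ followed by marginalization, and your trajectory bound uses the same stepwise maximal coupling combined with \cref{prop:control-tv}; the measurable-coupling and reachability remarks and the hybrid $T\epsilon$ argument are extra detail that the paper leaves implicit.
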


The proof is in \cref{app:proof-approx-equivalence}.

\subsection{Practical advantages of the world-action factorization}

Equal external capability classes and equal population action targets remain
compatible with substantial practical differences. At training time, the
future $Y$ may contain information about the demonstrated action. For
discrete variables under conditional log loss, the Bayes risks of direct and
future-conditioned action prediction are
\[
    H_\mu(A\mid H)
    \qquad\text{and}\qquad
    H_\mu(A\mid H,Y),
\]
where $H_\mu$ denotes conditional entropy under $p_\mu$. Their difference is
\begin{equation}
    H_\mu(A\mid H)
    -
    H_\mu(A\mid H,Y)
    =
    I_\mu(A;Y\mid H)
    \geq0.
    \label{eq:action-information}
\end{equation}

For Euclidean actions under squared loss,
\begin{equation}
\begin{split}
    &\E\left[
        \|A-\E[A\mid H]\|_2^2
    \right]
    -
    \E\left[
        \|A-\E[A\mid H,Y]\|_2^2
    \right]\\
    &=
    \E\left[
        \|
        \E[A\mid H,Y]-\E[A\mid H]
        \|_2^2
    \right]
    \geq0.
\end{split}
\label{eq:variance-reduction}
\end{equation}

Action prediction may therefore be easier when the realized future is
available as a training-time conditioning variable. At deployment, however,
that future has not yet occurred and must itself be predicted. The
factorization trades lower future-conditioned action uncertainty against
future-prediction error.

Future prediction may also improve video pretraining, temporal
representations, multimodal behavior modeling, parameter sharing, and
cross-task transfer. These are learning and representation advantages; they
do not show that standard future-then-inverse deployment performs
interventional action comparison.

% ============================================================
\section{Interventional Identification and Decision Separation}
\label{sec:separation}
% ============================================================

The previous section rules out an unrestricted external-policy-class
separation. The remaining distinction concerns whether a learner can identify
and use action-specific outcome information.

For the explicit Bayes-ratio arguments in this section, we assume that
$\cA$ and $\cY$ are finite. Accordingly, we write
\[
    \mathsf{T}_a(y\mid h)
    :=
    \mathsf{T}_a(\{y\}\mid h)
\]
for the probability mass assigned to $y$. Integrals with respect to
$\mathsf{T}_a(dy\mid h)$ therefore become sums over $y\in\cY$.

\subsection{Inverse dynamics as a behavior posterior}

Assume temporarily that the observational conditional is causally identified
on the evaluated support:
\begin{equation}
    p_\mu(y\mid h,a)
    =
    \mathsf{T}_a(y\mid h).
    \label{eq:temporary-identification}
\end{equation}

The future predictor then learns the behavior mixture
\begin{equation}
    \mathsf{T}_{\mu}(y\mid h)
    =
    \sum_{a\in\cA}
    \pimu(a\mid h)
    \mathsf{T}_a(y\mid h).
    \label{eq:behavior-mixture}
\end{equation}

For every $y$ with $\mathsf{T}_{\mu}(y\mid h)>0$, the exact inverse
conditional is
\begin{equation}
    p_\mu(a\mid h,y)
    =
    \frac{
        \pimu(a\mid h)\mathsf{T}_a(y\mid h)
    }{
        \sum_{b\in\cA}
        \pimu(b\mid h)\mathsf{T}_b(y\mid h)
    }.
    \label{eq:inverse-posterior}
\end{equation}

The inverse conditional depends on both the environmental effect
$\mathsf{T}_a(y\mid h)$ and the behavior prior $\pimu(a\mid h)$. Summing this
posterior against the behavior-mixture future recovers the behavior policy:
\begin{equation}
    \sum_{y\in\cY}
    \mathsf{T}_\mu(y\mid h)
    p_\mu(a\mid h,y)
    =
    \pimu(a\mid h).
    \label{eq:inverse-recovers-behavior}
\end{equation}

Inverse dynamics asks which behavior action is associated with a realized
transition. An interventional forward model asks what outcome would result
from a specified action. The two conditionals support different deployment
rules.

\subsection{An action-specific prediction metric}

The likelihood of a future under the behavior distribution does not measure
whether a model can answer action-specific ``what if'' questions. We instead
evaluate histories and candidate actions independently.

Let $\nu$ be an evaluation distribution over histories, and let
$\rho(\cdot\mid h)$ be a distribution over candidate actions. Define
\begin{equation}
\begin{split}
    \cE_{\mathrm{int}}(q)
    :=
    \E_{\substack{H\sim\nu\\A\sim\rho(\cdot\mid H)}}
    \left[
        \KL\left(
            \mathsf{T}_A(\cdot\mid H)
            \,\Vert\,
            q(\cdot\mid H,A)
        \right)
    \right].
\end{split}
\label{eq:interventional-risk}
\end{equation}

For a future predictor $g(y\mid h)$ that does not condition on the current
candidate action, define
\[
    q_g(y\mid h,a)
    :=
    g(y\mid h).
\]
Also define
\begin{equation}
    \mathsf{T}_{\rho}(y\mid h)
    =
    \sum_{a\in\cA}
    \rho(a\mid h)
    \mathsf{T}_a(y\mid h).
    \label{eq:rho-mixture}
\end{equation}

For a fixed history $h$, let
\begin{equation}
\begin{split}
    I_{\rho}^{\mathrm{int}}(A;Y\mid H=h)
    :=
    \sum_{a\in\cA}
    \rho(a\mid h)
    \KL\left(
        \mathsf{T}_a(\cdot\mid h)
        \,\Vert\,
        \mathsf{T}_{\rho}(\cdot\mid h)
    \right).
\end{split}
\label{eq:pointwise-interventional-information}
\end{equation}
Averaging over $H\sim\nu$ gives
\begin{equation}
\begin{split}
    I_{\nu,\rho}^{\mathrm{int}}(A;Y\mid H)
    &:=
    \E_{H\sim\nu}
    \left[
        I_{\rho}^{\mathrm{int}}(A;Y\mid H)
    \right].
\end{split}
\label{eq:interventional-mutual-information}
\end{equation}

\begin{theorem}[Irreducible action-unconditioned prediction gap]
\label{thm:prediction-gap}
For every predictor $g(y\mid h)$ that does not condition on the current
candidate action,
\begin{equation}
\begin{split}
    \cE_{\mathrm{int}}(q_g)
    =
    I_{\nu,\rho}^{\mathrm{int}}(A;Y\mid H)
    +
    \E_{H\sim\nu}
    \left[
        \KL\left(
            \mathsf{T}_{\rho}(\cdot\mid H)
            \,\Vert\,
            g(\cdot\mid H)
        \right)
    \right].
\end{split}
\label{eq:prediction-gap}
\end{equation}
Consequently,
\begin{equation}
    \inf_g
    \cE_{\mathrm{int}}(q_g)
    =
    I_{\nu,\rho}^{\mathrm{int}}(A;Y\mid H),
    \label{eq:minimum-gap}
\end{equation}
whereas an exact action-conditioned model
$q(y\mid h,a)=\mathsf{T}_a(y\mid h)$ has zero risk.
\end{theorem}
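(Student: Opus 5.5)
The plan is to prove the decomposition \cref{eq:prediction-gap} pointwise in $h$ and then average over $H\sim\nu$. Finiteness of $\cA$ and $\cY$, assumed throughout this section, lets every KL divergence be written as a finite sum.

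\textbf{Step 1: fix $h$ and add and subtract the mixture.} Fix a history $h$ and an action $a$ with $\rho(a\mid h)>0$. Since $q_g(\cdot\mid h,a)=g(\cdot\mid h)$ does not depend on $a$, I would insert the mixture $\mathsf{T}_\rho(\cdot\mid h)$ from \cref{eq:rho-mixture} inside the log-ratio. For every $y$ with $\mathsf{T}_a(y\mid h)>0$,
\[
    \log\frac{\mathsf{T}_a(y\mid h)}{g(y\mid h)}
    =
    \log\frac{\mathsf{T}_a(y\mid h)}{\mathsf{T}_\rho(y\mid h)}
    +
    \log\frac{\mathsf{T}_\rho(y\mid h)}{g(y\mid h)} .
\]
This is legitimate because $\mathsf{T}_\rho(y\mid h)\geq\rho(a\mid h)\mathsf{T}_a(y\mid h)>0$.

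\textbf{Step 2: average over candidate actions.} Take the expectation of both sides under $\mathsf{T}_a(\cdot\mid h)$ and then average over $a\sim\rho(\cdot\mid h)$.
\begin{itemize}[leftmargin=2em]
    \item The first term becomes $I^{\mathrm{int}}_\rho(A;Y\mid H=h)$ by \cref{eq:pointwise-interventional-information}.
    \item In the second term the integrand does not depend on $a$, and $\sum_a\rho(a\mid h)\mathsf{T}_a(y\mid h)=\mathsf{T}_\rho(y\mid h)$. The second term therefore collapses to $\KL(\mathsf{T}_\rho(\cdot\mid h)\Vert g(\cdot\mid h))$.
\end{itemize}
Averaging the resulting identity over $H\sim\nu$ gives \cref{eq:prediction-gap}. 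This is the familiar compensation identity $\sum_a\rho_a\KL(P_a\Vert Q)=\sum_a\rho_a\KL(P_a\Vert\bar P)+\KL(\bar P\Vert Q)$.

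\textbf{Step 3: the infimum.} Both terms in \cref{eq:prediction-gap} are nonnegative, so $\cE_{\mathrm{int}}(q_g)\geq I^{\mathrm{int}}_{\nu,\rho}(A;Y\mid H)$ for every $g$. The choice $g(\cdot\mid h)=\mathsf{T}_\rho(\cdot\mid h)$ is a valid kernel from $\cH$ to $\cY$, is measurable in $h$ because $\rho$ and $\mathsf{T}_a$ are kernels, and makes the second term vanish. This attains the bound and gives \cref{eq:minimum-gap}. For the exact action-conditioned model $q(y\mid h,a)=\mathsf{T}_a(y\mid h)$, every KL term in \cref{eq:interventional-risk} is zero.

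\textbf{Main obstacle: infinite values.} The only delicate point is the case where some quantities are $+\infty$. If $g$ fails to dominate $\mathsf{T}_\rho$ at some $h$ with positive $\nu$-mass, then $g$ also fails to dominate some $\mathsf{T}_a$ with $\rho(a\mid h)>0$. Both sides are then $+\infty$, so I would dispose of this case separately before doing the algebra.

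In the remaining finite case, splitting the expectation into two sums is justified because each piece is a finite sum of finite terms, with the convention $0\log 0=0$. The first piece is always finite, since $\mathsf{T}_a\ll\mathsf{T}_\rho$ whenever $\rho(a\mid h)>0$. Hence the identity holds in $[0,\infty]$ with no indeterminate $\infty-\infty$ forms.
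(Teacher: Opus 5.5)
Your proposal is correct and follows essentially the same route as the paper. As in the paper, you fix $h$ and insert $\mathsf{T}_\rho(\cdot\mid h)$ into the log-ratio, identify the first piece as $I^{\mathrm{int}}_\rho(A;Y\mid H=h)$, collapse the second via $\sum_a\rho(a\mid h)\mathsf{T}_a(y\mid h)=\mathsf{T}_\rho(y\mid h)$, average over $\nu$, and take $g=\mathsf{T}_\rho$ for the infimum; your separate treatment of the case where $g$ fails to dominate $\mathsf{T}_\rho$, which shows the identity holds in $[0,\infty]$, is a sound addition that the paper leaves implicit.
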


The proof is in \cref{app:proof-prediction-gap}. The lower bound is positive
whenever candidate actions produce different outcome distributions on a set
of histories with positive $\nu$-probability. A future under the behavior
distribution therefore cannot substitute for an action-specific prediction.

\subsection{Converting a world-action joint into a forward model}
\label{sec:wam-forward-conversion}

The preceding theorem concerns the future predictor in isolation. The complete
world-action joint may contain more information. An exact joint determines
the observational conditional $p_\mu(Y\mid H,A)$ wherever the behavior
distribution assigns positive probability to the action. Whether that
conditional has a causal interpretation, and whether the controller uses it
for action comparison, are separate questions.

\begin{proposition}[Extracting an observational forward conditional]
\label{prop:wam-forward-conversion}
Suppose that the learned world-action joint is exact at the population
optimum:
\begin{equation}
    q_F^*(y\mid h)q_I^*(a\mid h,y)
    =
    p_\mu(y,a\mid h)
    \label{eq:exact-wam-joint}
\end{equation}
for $p_\mu(H)$-almost every $h$. Define
\begin{equation}
    \bar{\pi}_q(a\mid h)
    :=
    \sum_{y\in\cY}
    q_F^*(y\mid h)q_I^*(a\mid h,y).
    \label{eq:wam-induced-action-marginal}
\end{equation}
Then
\begin{equation}
    \bar{\pi}_q(a\mid h)
    =
    p_\mu(a\mid h).
    \label{eq:wam-induced-action-equality}
\end{equation}

For every pair $(h,a)$ with $\bar{\pi}_q(a\mid h)>0$, define
\begin{equation}
    \mathsf{T}^{\mathrm{WA}}_q(y\mid h,a)
    :=
    \frac{
        q_F^*(y\mid h)q_I^*(a\mid h,y)
    }{
        \bar{\pi}_q(a\mid h)
    }.
    \label{eq:wam-extracted-forward-model}
\end{equation}
Then
\begin{equation}
    \mathsf{T}^{\mathrm{WA}}_q(y\mid h,a)
    =
    p_\mu(y\mid h,a).
    \label{eq:wam-extracted-observational-conditional}
\end{equation}
\end{proposition}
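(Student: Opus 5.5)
The plan is to fix a history $h$ in the full-measure set on which \cref{eq:exact-wam-joint} holds, and work pointwise there. Since $\cA$ and $\cY$ are finite in this section, all objects are probability mass functions and every step is a finite sum or a ratio. The exceptional $p_\mu(H)$-null set plays no role, because both conclusions are asserted only almost everywhere.

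For \cref{eq:wam-induced-action-equality}, I would substitute the exact joint into the definition \cref{eq:wam-induced-action-marginal}:
\[
    \bar{\pi}_q(a\mid h)=\sum_{y\in\cY} p_\mu(y,a\mid h).
\]
This is the marginal of $A$ under $p_\mu(\cdot,\cdot\mid h)$, namely $p_\mu(a\mid h)$. The argument uses only the exactness of the product, so it does not need each factor to equal its own target. In the same way as \cref{thm:population-equivalence}, it also does not need behavior sufficiency.

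For \cref{eq:wam-extracted-observational-conditional}, I would take a pair $(h,a)$ with $\bar{\pi}_q(a\mid h)>0$. By the first part, $p_\mu(a\mid h)>0$, so the conditional $p_\mu(y\mid h,a)=p_\mu(y,a\mid h)/p_\mu(a\mid h)$ is well defined by the elementary definition of conditional probability. Replacing the numerator of \cref{eq:wam-extracted-forward-model} by $p_\mu(y,a\mid h)$ using \cref{eq:exact-wam-joint}, and the denominator by $p_\mu(a\mid h)$ using the first part, gives the identity.

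The only delicate point is the positivity and support bookkeeping. The first part is what makes the support condition $\bar{\pi}_q(a\mid h)>0$ coincide with the behavior-support condition $p_\mu(a\mid h)>0$. Without it, the extracted kernel could be defined at pairs where the observational conditional is not determined. I would state explicitly that nothing is claimed off this support, and that no causal interpretation is asserted here. Identification as $\mathsf{T}_a$ is deferred to the separate conditions of consistency, conditional exchangeability, and positivity.
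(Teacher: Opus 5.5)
Your proposal is correct and matches the paper's proof: sum the exact joint over $y$ to obtain $\bar{\pi}_q(a\mid h)=p_\mu(a\mid h)$, then substitute the exact joint and this marginal into the numerator and denominator of $\mathsf{T}^{\mathrm{WA}}_q$. Your extra remarks are accurate refinements of bookkeeping the paper leaves implicit. Only the product of the two factors needs to be exact, and on the full-measure set of histories the first part turns the condition $\bar{\pi}_q(a\mid h)>0$ into $p_\mu(a\mid h)>0$.
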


\begin{proof}
Summing \cref{eq:exact-wam-joint} over $y$ gives
\[
\begin{split}
    \bar{\pi}_q(a\mid h)
    &=
    \sum_{y\in\cY}
    q_F^*(y\mid h)q_I^*(a\mid h,y)\\
    &=
    \sum_{y\in\cY}
    p_\mu(y,a\mid h)\\
    &=
    p_\mu(a\mid h).
\end{split}
\]
For $p_\mu(a\mid h)>0$,
\[
\begin{split}
    \mathsf{T}^{\mathrm{WA}}_q(y\mid h,a)
    &=
    \frac{
        q_F^*(y\mid h)q_I^*(a\mid h,y)
    }{
        \bar{\pi}_q(a\mid h)
    }\\
    &=
    \frac{
        p_\mu(y,a\mid h)
    }{
        p_\mu(a\mid h)
    }\\
    &=
    p_\mu(y\mid h,a).
\end{split}
\]
\end{proof}

The proposition is probabilistic rather than causal. To identify
$p_\mu(Y\mid H,A)$ with an interventional distribution, let $Y(a)$ denote the
potential outcome under action $a$ and suppose that the following conditions
hold:

\begin{enumerate}[leftmargin=2em]
    \item \textbf{Consistency:}
    if $A=a$, then $Y=Y(a)$;

    \item \textbf{Conditional exchangeability:}
    \begin{equation}
        Y(a)\indep A\mid H;
        \label{eq:conditional-exchangeability}
    \end{equation}

    \item \textbf{Positivity:}
    \begin{equation}
        p_\mu(a\mid h)>0
        \label{eq:causal-positivity}
    \end{equation}
    for every evaluated history-action pair.
\end{enumerate}

Under these conditions,
\begin{equation}
\begin{split}
    \mathsf{T}^{\mathrm{WA}}_q(y\mid h,a)
    &=
    p_\mu(y\mid h,a)\\
    &=
    P_{\cM}
    \bigl(
        Y=y
        \mid
        H=h,\doop(A=a)
    \bigr)\\
    &=
    \mathsf{T}_a(y\mid h).
\end{split}
\label{eq:wam-extracted-interventional-model}
\end{equation}

Equivalently,
\begin{equation}
    \mathsf{T}_a(y\mid h)
    =
    \frac{
        q_F^*(y\mid h)q_I^*(a\mid h,y)
    }{
        \sum_{y'\in\cY}
        q_F^*(y'\mid h)q_I^*(a\mid h,y')
    }.
    \label{eq:wam-interventional-recovery}
\end{equation}
At the population optimum,
\begin{equation}
    \mathsf{T}_a(y\mid h)
    =
    \frac{
        p_\mu(y\mid h)p_\mu(a\mid h,y)
    }{
        p_\mu(a\mid h)
    }.
    \label{eq:bayes-recovery}
\end{equation}
The causal derivation is given in \cref{app:proof-bayes-recovery}.

Three requirements are involved: the world-action factors must recover the
observational joint, the action must have positive support so that the
observational conditional is defined, and that conditional must be causally
identified. None of these requirements determines how the model is deployed.

Indeed, standard world-action deployment still gives
\begin{equation}
\begin{split}
    \pi_{\mathrm{WA}}^*(a\mid h)
    &=
    \sum_{y\in\cY}
    q_F^*(y\mid h)q_I^*(a\mid h,y)\\
    &=
    p_\mu(a\mid h).
\end{split}
\label{eq:standard-wam-still-behavior}
\end{equation}

A planning rule uses the same joint differently. For a one-step utility
$r(h,a,y)\in[0,1]$, define
\begin{equation}
    Q_{\mathrm{WA}}(h,a)
    :=
    \E_{
        Y\sim
        \mathsf{T}^{\mathrm{WA}}_q(\cdot\mid h,a)
    }
    \left[
        r(h,a,Y)
    \right]
    \label{eq:wam-converted-q}
\end{equation}
and select
\begin{equation}
    \pi_{\mathrm{plan}}(h)
    \in
    \argmaxop_a
    Q_{\mathrm{WA}}(h,a).
    \label{eq:wam-converted-planner}
\end{equation}

\begin{corollary}[One-step planning with an identified world-action joint]
\label{cor:planning-with-wam-joint}
Suppose that \cref{prop:wam-forward-conversion} holds and that consistency,
conditional exchangeability, and positivity identify
\[
    \mathsf{T}^{\mathrm{WA}}_q(\cdot\mid h,a)
    =
    \mathsf{T}_a(\cdot\mid h)
\]
for every action considered by the planner. Then
\begin{equation}
    \pi_{\mathrm{plan}}(h)
    \in
    \argmaxop_a
    \E_{Y\sim\mathsf{T}_a(\cdot\mid h)}
    [r(h,a,Y)].
    \label{eq:identified-wam-optimal-planning}
\end{equation}
\end{corollary}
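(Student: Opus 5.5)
The plan is to reduce the corollary to a pointwise substitution inside the argmax, using the identification hypothesis action by action. Fix a history $h$ at which \cref{prop:wam-forward-conversion} applies, that is, a history outside the $p_\mu(H)$-null set on which \cref{eq:exact-wam-joint} may fail. Let $\cA_{\mathrm{plan}}(h)\subseteq\cA$ denote the set of actions considered by the planner. By positivity, every $a\in\cA_{\mathrm{plan}}(h)$ satisfies $p_\mu(a\mid h)>0$. By \cref{eq:wam-induced-action-equality}, we then have $\bar{\pi}_q(a\mid h)>0$, so $\mathsf{T}^{\mathrm{WA}}_q(\cdot\mid h,a)$ is well defined by \cref{eq:wam-extracted-forward-model}. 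This is the only place where positivity matters. Without it, $Q_{\mathrm{WA}}(h,a)$ would not even be defined, and the argmax in \cref{eq:wam-converted-planner} would have to be read over the supported actions only.

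The central step is to compare the two value functions. For each $a\in\cA_{\mathrm{plan}}(h)$, the identification hypothesis gives $\mathsf{T}^{\mathrm{WA}}_q(y\mid h,a)=\mathsf{T}_a(y\mid h)$ for all $y\in\cY$; equivalently, this is \cref{eq:wam-extracted-interventional-model}. Since $\cY$ is finite and $r(h,a,\cdot)$ is bounded, both expectations are finite sums. Substituting termwise into \cref{eq:wam-converted-q} yields
\[
Q_{\mathrm{WA}}(h,a)=\sum_{y\in\cY} r(h,a,y)\,\mathsf{T}_a(y\mid h)=\E_{Y\sim\mathsf{T}_a(\cdot\mid h)}[r(h,a,Y)]
\]
for every $a\in\cA_{\mathrm{plan}}(h)$.

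It then follows that the two functions $a\mapsto Q_{\mathrm{WA}}(h,a)$ and $a\mapsto\E_{Y\sim\mathsf{T}_a(\cdot\mid h)}[r(h,a,Y)]$ coincide on $\cA_{\mathrm{plan}}(h)$, so their argmax sets over that domain are identical. Because $\cA$ is finite, both maxima are attained. Hence any $\pi_{\mathrm{plan}}(h)$ chosen according to \cref{eq:wam-converted-planner} belongs to the true interventional argmax, which is \cref{eq:identified-wam-optimal-planning}.

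The main obstacle is not computational but interpretive: the argmax in \cref{eq:identified-wam-optimal-planning} is over the same action set the planner ranges over. If the statement is read as an argmax over all of $\cA$, the equality fails whenever the best action lies outside the behavior support, because there the observational joint gives no information. I would therefore state explicitly that both argmaxes are taken over the actions considered by the planner, as the hypothesis ``for every action considered by the planner'' indicates, and that the conclusion holds for $p_\mu(H)$-almost every $h$.
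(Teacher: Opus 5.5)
Your proposal is correct and takes the same approach as the paper: substitute the identified kernel $\mathsf{T}^{\mathrm{WA}}_q(\cdot\mid h,a)=\mathsf{T}_a(\cdot\mid h)$ into $Q_{\mathrm{WA}}(h,a)$ for each candidate action, so the two objectives coincide and have the same maximizers. Your extra remarks make explicit what the paper's two-line proof leaves implicit: positivity makes $\mathsf{T}^{\mathrm{WA}}_q$ well defined, the argmax ranges over the planner's action set, and the conclusion holds for $p_\mu(H)$-almost every $h$.
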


\begin{proof}
Under the assumptions,
\[
    Q_{\mathrm{WA}}(h,a)
    =
    \E_{Y\sim\mathsf{T}_a(\cdot\mid h)}
    [r(h,a,Y)]
\]
for every candidate action. Maximizing $Q_{\mathrm{WA}}(h,a)$ therefore
maximizes the true interventional expected utility.
\end{proof}

The same exact joint can thus support either behavior reproduction or
action comparison. The distinction lies in causal identification and
deployment, not merely in the represented joint distribution.

\subsection{Observational non-identification}

The recovery result depends on support and causal assumptions. Observational
fitting alone does not guarantee either condition.

\begin{theorem}[Two sources of observational non-identification]
\label{thm:nonidentification}
Interventional action effects are not identified by
$p_\mu(H,A,Y)$ in general.

\begin{enumerate}[label=(\roman*),leftmargin=2em]
    \item Without positivity, two causal environments can have the same
    observational distribution while differing under an unsupported action
    intervention.

    \item Even when every observed action has positive probability, two
    causal environments can have the same observational distribution while
    differing interventionally if $H$ omits a variable that affects both
    action selection and outcomes.
\end{enumerate}
\end{theorem}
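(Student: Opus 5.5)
Both parts are non-identification statements, so I will prove each by an explicit pair of one-step causal environments $\cM,\cM'$ with finite $\cA$ and $\cY$. For each pair I will check two things. First, the two environments induce the same observational distribution $p_\mu(H,A,Y)$ under the same demonstration mechanism. Second, the interventional kernels differ, so that $\mathsf{T}_a(\cdot\mid h)\neq\mathsf{T}'_a(\cdot\mid h)$ for some $(h,a)$. Any map from $p_\mu(H,A,Y)$ to interventional kernels must then be wrong in at least one of the two environments. This is what ``not identified'' means here. It is worth noting that the conclusion survives even when $Y$ is rich enough that $\mathsf{T}_a$ determines a downstream utility. In that case the planning rule of \cref{cor:planning-with-wam-joint} can choose different actions in the two environments, although it receives identical inputs.

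For (i), I take a single history $h$ (or a trivial $H$), actions $\cA=\{0,1\}$, and outcomes $\cY=\{0,1\}$. The demonstrator is deterministic with $\pimu(0\mid h)=1$, so positivity fails at $a=1$. Both environments use the same kernel $\mathsf{T}_0(\cdot\mid h)=\delta_0$. They differ only at the unsupported action: $\mathsf{T}_1=\delta_0$ in $\cM$ and $\mathsf{T}_1=\delta_1$ in $\cM'$. The observational joint equals $\delta_h\otimes\delta_0\otimes\delta_0$ in both environments because action $1$ is never taken. In $\cM'$, however, $P(Y\mid H=h,\doop(A=1))$ differs. I will also remark that $p_\mu(y\mid h,1)$ is undefined in both environments. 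This matches the positivity requirement in \cref{prop:wam-forward-conversion}.

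For (ii), I introduce a latent binary confounder $U\sim\mathrm{Bernoulli}(1/2)$ that is excluded from $H$. The demonstrator sets $A=U$. This is a kernel of $(H,U)$ and not of $H$ alone, and it gives $p_\mu(a\mid h)=1/2>0$ for both actions. In $\cM$, I set $Y=A$, so the outcome depends only on the action. In $\cM'$, I set $Y=U$, so the outcome ignores the action. Observationally, both environments produce $(A,Y)=(U,U)$, and so they have the identical joint $p_\mu(a,y\mid h)=\tfrac12\mathbf{1}\{a=y\}$. Under intervention, the kernels differ: $\mathsf{T}_a=\delta_a$ in $\cM$, while $\mathsf{T}'_a=\mathrm{Bernoulli}(1/2)$ in $\cM'$ for both values of $a$. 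I will point out that $\cM'$ violates conditional exchangeability \cref{eq:conditional-exchangeability}, because $Y(a)=U$ is dependent on $A=U$ given $H$. This is exactly the assumption whose failure (ii) is meant to exhibit.

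The main obstacle is formal rather than computational. The environments must fit the paper's framework $(\cS,\cO,\cA,P,\Omega,\rho_0,T)$, and the statement ``same observational distribution'' must be made precise. My plan is to use $T=1$ and to place $U$ in the latent state $S_0$. The observation $O_0$ will be constant, so that $H_0=(G,O_0)$ carries no information about $U$. I will set $Y=O_1$. The demonstration mechanism $\mu$ is allowed to read $S_0$, which is permitted because $\mu$ is the data-generating process and not a policy in $\Kern(\cA\mid\cH)$. Interventions act only on $A_0$. With these choices, both the observational and the interventional computations are the short finite calculations given above.
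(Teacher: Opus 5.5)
Your proposal is correct and uses the same two constructions as the paper's proof: a deterministic demonstrator with $A=0$ and environments that differ only under the unsupported intervention $\doop(A=1)$ for (i), and a hidden $\operatorname{Bernoulli}(1/2)$ confounder with $A=Z$, comparing $Y=A$ against $Y=Z$, for (ii). Your additional embedding into the $T=1$ framework and your observation that the second environment violates conditional exchangeability are valid refinements that the paper leaves implicit.
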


The proof is in \cref{app:proof-nonidentification}. The first construction
uses an action that is never selected by the behavior policy. The second uses
a demonstrator-only variable omitted from $H$. Thus, action coverage and
causal sufficiency address different identification failures.

\subsection{Strict separation of observational and interventional learners}

The strongest separation is not between the external policy classes. It is
between learners that receive different information about action effects.

For an environment $\cM$ and bounded utility $U$, define regret relative to a
candidate class $\PiPol$ by
\begin{equation}
    \Reg_{\cM,U}(\pi)
    :=
    \sup_{\pi'\in\PiPol}
    J_{\cM,U}^{T}(\pi')
    -
    J_{\cM,U}^{T}(\pi).
    \label{eq:policy-regret}
\end{equation}

Under the assumptions of \cref{thm:population-equivalence},
\[
    \pi_{\mathrm{WA}}^*
    =
    \pi_{\mathrm{A}}^*
    =
    \pimu.
\]

\begin{corollary}[Suboptimal-demonstrator separation]
\label{cor:suboptimal-separation}
Suppose that behavior sufficiency holds, that $\pimu\in\PiPol$, and that
\begin{equation}
    \Delta
    :=
    \sup_{\pi\in\PiPol}
    J_{\cM,U}^{T}(\pi)
    -
    J_{\cM,U}^{T}(\pimu)
    >0.
    \label{eq:demonstrator-gap}
\end{equation}
Assume that an optimal policy in $\PiPol$ exists, that the learned
next-history model equals the true interventional kernel on every
history-action pair reachable by policies in $\PiPol$, and that model-based
optimization is exact. Then
\begin{equation}
\begin{split}
    J_{\cM,U}^{T}(\pi_{\mathrm{WM}}^*)
    &=
    J_{\cM,U}^{T}(\pi_{\mathrm{WA}}^*)+\Delta\\
    &=
    J_{\cM,U}^{T}(\pi_{\mathrm{A}}^*)+\Delta.
\end{split}
\label{eq:suboptimal-separation}
\end{equation}
\end{corollary}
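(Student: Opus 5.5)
The argument has two independent halves: show that the world-model policy attains the best value in $\PiPol$, and show that both imitation policies attain exactly the demonstrator's value. The gap $\Delta$ then links the two.

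\textbf{World-model side.} Fix an optimal policy $\pi^\star\in\PiPol$, which exists by hypothesis, so that
\[
J_{\cM,U}^{T}(\pi^\star)=\sup_{\pi\in\PiPol}J_{\cM,U}^{T}(\pi).
\]
The first step is to show that $J_{\widehat{\mathsf{K}},U}^{T}(\pi)=J_{\cM,U}^{T}(\pi)$ for every $\pi\in\PiPol$. For this I would build the two trajectory laws by the Ionescu--Tulcea construction. Both start from $H_0\sim\eta_0$ and alternate $A_t\sim\pi(\cdot\mid H_t)$ with a history update: $\widehat{\mathsf{K}}$ in the model rollout and $\mathsf{K}_{\cM}$ in the environment. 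The environment law $P_{\cM}^{\pi}$ is generated by exactly these kernels, because $\mathsf{K}_{\cM}$ is defined as the interventional next-history kernel and the policy acts only through $H_t$.

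I would then induct on $t$. Suppose the joint laws of $(H_0,A_0,\ldots,H_t)$ agree under the model and the environment. Then the laws of $(H_0,\ldots,H_t,A_t)$ agree, since the same policy kernel is applied. The next update also agrees, because every pair $(h,a)$ in the support is reachable by $\pi\in\PiPol$, and on such pairs the hypothesis gives $\widehat{\mathsf{K}}=\mathsf{K}_{\cM}$. Hence $P_{\widehat{\mathsf{K}}}^{\pi}=P_{\cM}^{\pi}$.

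The main obstacle is making ``reachable'' precise. It should mean that $\widehat{\mathsf{K}}(\cdot\mid h,a)=\mathsf{K}_{\cM}(\cdot\mid h,a)$ for almost every $(h,a)$ under the step-$t$ marginal of $P_{\cM}^{\pi}$, for each $t$. With that reading the induction goes through, since the set of disagreement has probability zero under the common law. A subtlety is that histories reached under the model could in principle differ from those reached under the environment. The induction resolves this: at each step the two laws coincide up to that time, so reachability under either rollout is the same notion.

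Once the value functions coincide on $\PiPol$, exact model-based optimization returns some $\pi_{\mathrm{WM}}^*\in\argmaxop_{\pi\in\PiPol}J_{\widehat{\mathsf{K}},U}^{T}(\pi)$, and therefore
\[
J_{\cM,U}^{T}(\pi_{\mathrm{WM}}^*)=J_{\cM,U}^{T}(\pi^\star)=\sup_{\pi\in\PiPol}J_{\cM,U}^{T}(\pi).
\]

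\textbf{Imitation side.} Under behavior sufficiency, \cref{thm:population-equivalence} gives
\[
P_{\cM}^{\pi_{\mathrm{WA}}^*}=P_{\cM}^{\pi_{\mathrm{A}}^*}=P_{\cM}^{\pimu}.
\]
Integrating $U$ against these equal laws shows that both imitation policies have value $J_{\cM,U}^{T}(\pimu)$. A more direct route is \cref{prop:control-tv} with zero TV distance.

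\textbf{Combining.} Substituting the definition of $\Delta$ yields
\[
J_{\cM,U}^{T}(\pi_{\mathrm{WM}}^*)=J_{\cM,U}^{T}(\pimu)+\Delta=J_{\cM,U}^{T}(\pi_{\mathrm{WA}}^*)+\Delta=J_{\cM,U}^{T}(\pi_{\mathrm{A}}^*)+\Delta,
\]
which is the claimed identity. The hypothesis $\pimu\in\PiPol$ ensures that $\Delta$ is measured against a class containing the demonstrator, so that $\Delta\ge 0$. The assumption $\Delta>0$ then makes the separation strict.
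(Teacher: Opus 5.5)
Your proposal is correct and follows essentially the same route as the paper. You show $J_{\widehat{\mathsf{K}},U}^{T}=J_{\cM,U}^{T}$ on $\PiPol$, so exact model-based optimization attains $\sup_{\pi\in\PiPol}J_{\cM,U}^{T}(\pi)$; you use \cref{thm:population-equivalence} with behavior sufficiency to give $\pi_{\mathrm{WA}}^*$ and $\pi_{\mathrm{A}}^*$ the demonstrator's value; and you substitute the definition of $\Delta$, exactly as the paper does, except that you spell out, via an inductive Ionescu--Tulcea argument with an almost-everywhere reading of ``reachable'', the model-exactness step that the paper asserts in one line.
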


\begin{proof}
Exactness of the interventional model on the relevant history-action pairs
implies
\[
    J_{\widehat{\mathsf{K}},U}^{T}(\pi)
    =
    J_{\cM,U}^{T}(\pi)
    \qquad
    \text{for every }\pi\in\PiPol.
\]
Exact optimization therefore gives the best true value in $\PiPol$. By
\cref{thm:population-equivalence} and behavior sufficiency,
\[
    J_{\cM,U}^{T}(\pi_{\mathrm{WA}}^*)
    =
    J_{\cM,U}^{T}(\pi_{\mathrm{A}}^*)
    =
    J_{\cM,U}^{T}(\pimu).
\]
Substituting the definition of $\Delta$ proves the result.
\end{proof}

The corollary assumes that the relevant action effects are already known or
identified. Interventions can also create a strict information advantage.

\begin{theorem}[Strict value of interventional information]
\label{thm:strict-intervention}
There exists a one-step, two-action environment family with a common known
utility and candidate class
\[
    \PiPol
    =
    \Kern(\{0,1\}\mid\{h\})
\]
such that:

\begin{enumerate}[label=(\roman*),leftmargin=2em]
    \item the observational distribution $p_\mu(H,A,Y)$ is identical across
    the environments;

    \item every observational population learner
    $L_{\mathrm{obs}}\in\LearnClass_{\mathrm{obs}}(\PiPol)$ has worst-case
    regret at least $1/4$;

    \item the exact population optima $\pi_{\mathrm{A}}^*$ and
    $\pi_{\mathrm{WA}}^*$ have worst-case regret $1/2$;

    \item one informative action intervention identifies the environment and
    permits an interventional learner, representable as an exact
    world-model learner, to achieve zero regret.
\end{enumerate}
\end{theorem}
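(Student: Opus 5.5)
I will construct two environments $\cM_0,\cM_1$ with a single history $h$, actions $\cA=\{0,1\}$, outcome $Y\in\{0,1\}$, and utility $U(\tau)=Y$ (so $J^{T}_{\cM,U}(\pi)=\sum_a\pi(a\mid h)\mathsf{T}_a(1\mid h)$). The non-identification must come from \cref{thm:nonidentification}. The cleanest choice is the unsupported-action mechanism.

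Let the demonstrator play $\pimu(1\mid h)=1$. Action $1$ gives $Y=1$ with probability $1/2$ in both environments. Action $0$ gives $Y=1$ with probability $1$ in $\cM_0$ and probability $0$ in $\cM_1$. Then $p_\mu(H,A,Y)$ is the same in both: $A=1$ almost surely and $Y\sim\mathrm{Bern}(1/2)$. This gives (i).

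In $\cM_0$ the optimal value is $1$ via action $0$. In $\cM_1$ it is $1/2$ via action $1$.

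For (ii), any observational learner outputs a single $\pi$ with $\pi(0\mid h)=\alpha$, because its input is identical in both environments. The regrets are $1-(\alpha+(1-\alpha)/2)=(1-\alpha)/2$ in $\cM_0$ and $1/2-(1-\alpha)/2=\alpha/2$ in $\cM_1$. The worst case is $\max\{(1-\alpha)/2,\alpha/2\}\ge 1/4$, with equality at $\alpha=1/2$.

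For (iii), \cref{thm:population-equivalence} gives $\pi_{\mathrm{A}}^*=\pi_{\mathrm{WA}}^*=\pimu$. This is the case $\alpha=0$, so the regret is $1/2$ in $\cM_0$ and the worst case is exactly $1/2$. Since there is a single history, the almost-everywhere qualifier is harmless.

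For (iv), take $\cD_{\mathrm{int}}$ to be the interventional outcome law $\mathsf{T}_0(\cdot\mid h)$, i.e.\ the population outcome of the intervention $\doop(A=0)$. This law is $\delta_1$ in $\cM_0$ and $\delta_0$ in $\cM_1$, so it identifies the environment. The learner then sets $\widehat{\mathsf{T}}(\cdot\mid h,0)=\mathsf{T}_0(\cdot\mid h)$ and $\widehat{\mathsf{T}}(\cdot\mid h,1)=p_\mu(\cdot\mid h,1)$. The second assignment is valid because the supported action satisfies exchangeability trivially: there is no confounder, and $Y(1)$ is independent of $A$. With these choices $\widehat{\mathsf{T}}$ is exact, and the argmax of \cref{eq:wm-policy} attains the optimum, so the regret is zero, as in \cref{cor:suboptimal-separation}.

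The main obstacle is bookkeeping rather than mathematics. I must make sure the "single informative intervention" is formalized as population interventional data, so that a single random draw is not required. The one-draw version also works here, because the outcomes are deterministic under action $0$.

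I also need the worst-case bound in (ii) to hold for every mapping $L_{\mathrm{obs}}$, including randomized ones. The argument above covers these: any randomization over output policies averages to a kernel with some $\alpha$, and regret is linear in the policy. A learner whose output is itself random would have expected regret bounded the same way.
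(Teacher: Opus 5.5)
Your construction is correct and is essentially the paper's proof. The paper also uses a single history, a demonstrator that never plays one action (there it always plays $A=0$ and receives the deterministic outcome $Y=1/2$, where you use $\mathrm{Bern}(1/2)$), the utility $U=Y$, regrets $(1-p)/2$ and $p/2$, and one intervention on the unsupported action to identify the environment. Your relabelling of the actions, the stochastic outcome, and the explicit construction of $\widehat{\mathsf{T}}$ are cosmetic differences that leave the argument unchanged.
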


The proof is in \cref{app:proof-strict-intervention}. The theorem isolates the
source of the separation: the observational learners output ordinary
stochastic policies, but the observational distribution does not determine
which policy is optimal across the environment family.

% ============================================================
\section{Conclusion}
\label{sec:limitations-conclusion}
% ============================================================
A direct behavior-cloning policy and an imitation-trained world-action policy
use different internal factorizations, but every distributional world-action
controller induces an ordinary stochastic action kernel. Under unrestricted
kernel classes, the two therefore have the same external
control-capability class. Under realizability, exact population optimization,
matched deployment information, and distribution-preserving deployment, they
also have the same population action target:
\[
    \pi_{\mathrm{WA}}^*
    =
    \pi_{\mathrm{A}}^*
    =
    \pimu.
\]
With behavior sufficiency, this equality extends to their complete
closed-loop trajectory distributions.

These results do not deny the practical value of world-action modeling.
Future prediction may improve temporal representations, video pretraining,
multimodal behavior modeling, parameter sharing, and finite-sample
performance. Gains of this kind concern how the observational target is
represented and learned; they do not by themselves establish interventional
reasoning.

The complete world-action joint can contain more information than the future
predictor alone. With positive action support, an exact joint determines
$p_\mu(Y\mid H,A)$. Under additional causal assumptions, that conditional may
equal
\[
    P(Y\mid H,\doop(A=a)).
\]
Standard future-then-inverse deployment nevertheless marginalizes the joint
and reproduces the behavior policy. Using the same representation for
world-model control requires a different operation: candidate actions must be
specified, their consequences evaluated, and their utilities compared.

Observational demonstrations do not identify these action effects in general.
Unsupported actions and hidden confounding can make different causal
environments observationally indistinguishable. Interventions, exploratory
coverage, or valid causal assumptions are required whenever the desired
decision depends on effects not identified by the demonstration
distribution.

This distinction also suggests how empirical claims should be evaluated.
Imitation quality should be separated from action-effect prediction. Evidence
for interventional control should test predictions from the same state under
multiple specified actions and determine whether explicit action comparison
improves value beyond future-then-inverse deployment. A practical system may
share a common representation while using separate components for
\[
    q_F(Y\mid H),
    \qquad
    q_I(A\mid H,Y),
    \qquad
    q_{\mathrm{do}}(Y\mid H,A).
\]
The first two support structured imitation; the third supports
action-specific evaluation.

The essential distinction is therefore between predicting a future associated
with observed behavior and predicting the consequences of specified actions
for policy optimization:
\[
    \text{future prediction for behavior decoding}
    \;\neq\;
    \text{interventional action-conditioned prediction for control}.
\]

While we expect this paper to shed some light on the understanding of world-action models and world models, it comes with several obvious limitations:

\textbf{Class-level rather than finite-network equivalence.}
The flattening theorem concerns unrestricted stochastic kernels, or
restricted classes closed under the required marginalization. Under
parameter, memory, latency, or compute constraints, a world-action
factorization may represent a useful policy more efficiently than a direct
architecture.

\textbf{Population rather than statistical analysis.}
The main comparison assumes realizability and exact population optimization.
It does not provide sample-complexity bounds or optimization guarantees for
large neural models. The approximate result only describes sensitivity to
specified distributional errors.

\textbf{Matched deployment information.}
The equivalence results require the compared policies to receive the same
history. If one architecture receives a longer context, privileged state,
future frame, external memory, or additional instruction at deployment, the
policy classes are not directly comparable.

\textbf{Behavior sufficiency.}
Equality with the demonstrator's trajectory distribution requires the
recorded history to contain the information used for action selection. When a
demonstrator uses omitted private information, direct and world-action
training still recover the same observational action marginal, but deploying
that marginal need not reproduce the original demonstrator--environment
coupling.

\textbf{Distribution-preserving deployment.}
MAP future selection, deterministic decoding, temperature changes,
best-of-$N$ sampling, and verifier-based future selection may alter the action
marginal. Such controllers remain flattenable into direct policies, but they
need not equal the behavior-cloning population optimum.

\textbf{Conditional identification.}
A perfect observational joint identifies interventional dynamics only under
valid support and causal assumptions. Greater model capacity cannot
substitute for missing interventions or unrecorded confounders.

\textbf{Decision relevance of the future variable.}
The predicted variable $Y$ must preserve information relevant to the utility
and subsequent decisions. Accurate prediction of visually salient but
decision-irrelevant features does not guarantee useful control.

\section*{Acknowledgments}

The author used a large language model as a writing assistant during the
preparation of this manuscript. The LLM assisted with language editing, symbol consistency checking, and formatting.

% ============================================================
\appendix
% ============================================================

\section{Proofs}
\label{app:proofs}

\subsection{Proof of \cref{prop:control-tv}}
\label{app:proof-control-tv}

For probability measures $P$ and $Q$,
\[
    \TV(P,Q)
    =
    \sup_{0\leq f\leq1}
    \left|
        \E_P[f]-\E_Q[f]
    \right|.
\]
Applying this variational characterization to
$P_{\cM}^{\pi}$ and $P_{\cM}^{\pi'}$, with $f=U$, gives
\[
\begin{split}
    \TV\left(
        P_{\cM}^{\pi},
        P_{\cM}^{\pi'}
    \right)
    &=
    \sup_{U:\cT_T\rightarrow[0,1]}
    \left|
        \E_{\tau\sim P_{\cM}^{\pi}}[U(\tau)]
        -
        \E_{\tau\sim P_{\cM}^{\pi'}}[U(\tau)]
    \right|\\
    &=
    d_{\mathrm{ctrl}}^{\cM,T}(\pi,\pi').
\end{split}
\]

\subsection{Proof of \cref{thm:flattening}}
\label{app:proof-flattening}

Define
\[
    \bar{\pi}(a\mid h)
    =
    \int
    q_F(y\mid h)q_I(a\mid h,y)\,dy.
\]
By construction,
\[
    \bar{\pi}(\cdot\mid h)
    =
    \pi_{\mathrm{WA}}(\cdot\mid h)
\]
at every history.

Starting from the same initial-history distribution, the two controllers
induce the same action distribution conditional on every common history.
Because they also share the same environment transition and observation
kernels, they induce the same next-history distribution. Induction over time
gives
\[
    P_{\cM}^{(q_F,q_I)}
    =
    P_{\cM}^{\bar{\pi}}.
\]

Conversely, let $\pi\in\Kern(\cA\mid\cH)$ and choose $y_0\in\cY$. Let
$\delta_{y_0}$ be the Dirac probability measure concentrated at $y_0$, and
define
\[
    q_F(\cdot\mid h)
    =
    \delta_{y_0},
    \qquad
    q_I(a\mid h,y_0)
    =
    \pi(a\mid h).
\]
Then
\[
\begin{split}
    \pi_{\mathrm{WA}}(a\mid h)
    &=
    \int
    \delta_{y_0}(dy)
    q_I(a\mid h,y)\\
    &=
    \pi(a\mid h).
\end{split}
\]
Thus, every direct stochastic policy has a degenerate world-action
representation.

\subsection{Proof of \cref{thm:population-equivalence}}
\label{app:proof-population-equivalence}

Let $H_\mu$ denote conditional entropy under the demonstration distribution.
The direct behavior-cloning objective decomposes as
\[
\begin{split}
    \cL_{\mathrm{A}}(\pi_{\mathrm{A}})
    &=
    H_\mu(A\mid H)\\
    &\quad+
    \E_H
    \left[
        \KL\left(
            p_\mu(A\mid H)
            \,\Vert\,
            \pi_{\mathrm{A}}(A\mid H)
        \right)
    \right].
\end{split}
\]
Under realizability, the KL term can be minimized to zero, so
\[
    \pi_{\mathrm{A}}^*(a\mid h)
    =
    p_\mu(a\mid h)
    =
    \pimu(a\mid h)
\]
for $p_\mu(H)$-almost every $h$.

Similarly,
\[
\begin{split}
    \cL_{\mathrm{WA}}(q_F,q_I)
    &=
    H_\mu(Y\mid H)
    +
    H_\mu(A\mid H,Y)\\
    &\quad+
    \E_H
    \left[
        \KL\left(
            p_\mu(Y\mid H)
            \,\Vert\,
            q_F(Y\mid H)
        \right)
    \right]\\
    &\quad+
    \E_{H,Y}
    \left[
        \KL\left(
            p_\mu(A\mid H,Y)
            \,\Vert\,
            q_I(A\mid H,Y)
        \right)
    \right].
\end{split}
\]
At an exact population optimum,
\[
    q_F^*(y\mid h)
    =
    p_\mu(y\mid h),
    \qquad
    q_I^*(a\mid h,y)
    =
    p_\mu(a\mid h,y)
\]
almost everywhere. Therefore,
\[
\begin{split}
    \pi_{\mathrm{WA}}^*(a\mid h)
    &=
    \int
    p_\mu(y\mid h)
    p_\mu(a\mid h,y)\,dy\\
    &=
    \int
    p_\mu(a,y\mid h)\,dy\\
    &=
    p_\mu(a\mid h)\\
    &=
    \pimu(a\mid h).
\end{split}
\]

Under behavior sufficiency, $d_{\mu,t}$ is generated by deploying $\pimu$ in
$\cM$. At $d_{\mu,t}$-almost every history, the three policies have the same
action kernel. Induction over time yields
\[
    P_{\cM}^{\pi_{\mathrm{WA}}^*}
    =
    P_{\cM}^{\pi_{\mathrm{A}}^*}
    =
    P_{\cM}^{\pimu}.
\]

\subsection{Proof of \cref{prop:approx-equivalence}}
\label{app:proof-approx-equivalence}

Fix a history $h$ and define
\[
    Q_h(dy,da)
    =
    q_F(dy\mid h)q_I(da\mid h,y),
\]
\[
    P_h(dy,da)
    =
    p_\mu(dy\mid h)p_\mu(da\mid h,y),
\]
and
\[
    R_h(dy,da)
    =
    p_\mu(dy\mid h)q_I(da\mid h,y).
\]

Marginalization contracts total variation, so
\[
    \TV\left(
        \pi_{\mathrm{WA}}(\cdot\mid h),
        \pimu(\cdot\mid h)
    \right)
    \leq
    \TV(Q_h,P_h).
\]
By the triangle inequality,
\[
    \TV(Q_h,P_h)
    \leq
    \TV(Q_h,R_h)
    +
    \TV(R_h,P_h).
\]
The two terms are bounded by $\epsilon_F$ and $\epsilon_I$, respectively.
Hence
\[
    \TV\left(
        \pi_{\mathrm{WA}}(\cdot\mid h),
        \pimu(\cdot\mid h)
    \right)
    \leq
    \epsilon_F+\epsilon_I.
\]
Applying the triangle inequality through $\pimu$ gives
\[
    \TV\left(
        \pi_{\mathrm{WA}}(\cdot\mid h),
        \pi_{\mathrm{A}}(\cdot\mid h)
    \right)
    \leq
    \epsilon.
\]

For the sequential bound, use maximal coupling at each common history. As
long as the trajectory prefixes agree, the actions can be coupled to agree
with probability at least $1-\epsilon$. Conditional on equal actions, the
same environment kernel can be used to couple the next observations.
Therefore,
\[
    \TV\left(
        P_{\cM}^{\pi_{\mathrm{WA}}},
        P_{\cM}^{\pi_{\mathrm{A}}}
    \right)
    \leq
    1-(1-\epsilon)^T
    \leq
    T\epsilon.
\]
The conclusion follows from \cref{prop:control-tv}.

\subsection{Proof of \cref{thm:prediction-gap}}
\label{app:proof-prediction-gap}

For a fixed history $h$,
\[
\begin{split}
    &\sum_{a\in\cA}
    \rho(a\mid h)
    \KL\left(
        \mathsf{T}_a(\cdot\mid h)
        \,\Vert\,
        g(\cdot\mid h)
    \right)\\
    &=
    \sum_{a\in\cA}
    \rho(a\mid h)
    \sum_{y\in\cY}
    \mathsf{T}_a(y\mid h)
    \log
    \frac{
        \mathsf{T}_a(y\mid h)
    }{
        g(y\mid h)
    }.
\end{split}
\]
Insert $\mathsf{T}_\rho$:
\[
    \log
    \frac{\mathsf{T}_a}{g}
    =
    \log
    \frac{\mathsf{T}_a}{\mathsf{T}_\rho}
    +
    \log
    \frac{\mathsf{T}_\rho}{g}.
\]
The first term gives
\[
    I_{\rho}^{\mathrm{int}}
    (A;Y\mid H=h).
\]
Using
\[
    \sum_{a\in\cA}
    \rho(a\mid h)
    \mathsf{T}_a(y\mid h)
    =
    \mathsf{T}_\rho(y\mid h),
\]
the second term gives
\[
    \KL\left(
        \mathsf{T}_\rho(\cdot\mid h)
        \,\Vert\,
        g(\cdot\mid h)
    \right).
\]
Averaging over $H\sim\nu$ proves \cref{eq:prediction-gap}. The second KL term
is minimized to zero by
\[
    g(\cdot\mid h)
    =
    \mathsf{T}_\rho(\cdot\mid h),
\]
which proves \cref{eq:minimum-gap}.

\subsection{Derivation of \cref{eq:bayes-recovery}}
\label{app:proof-bayes-recovery}

Conditional exchangeability gives
\[
    Y(a)\indep A\mid H.
\]
Therefore,
\[
    P(Y(a)=y\mid H=h)
    =
    P(Y(a)=y\mid H=h,A=a).
\]
By consistency,
\[
    P(Y(a)=y\mid H=h,A=a)
    =
    P(Y=y\mid H=h,A=a).
\]
Thus,
\[
    \mathsf{T}_a(y\mid h)
    =
    p_\mu(y\mid h,a).
\]
Under positivity,
\[
\begin{split}
    p_\mu(y\mid h,a)
    &=
    \frac{
        p_\mu(y,a\mid h)
    }{
        p_\mu(a\mid h)
    }\\
    &=
    \frac{
        p_\mu(y\mid h)p_\mu(a\mid h,y)
    }{
        \pimu(a\mid h)
    }.
\end{split}
\]

\subsection{Proof of \cref{thm:nonidentification}}
\label{app:proof-nonidentification}

We give separate constructions for support failure and hidden confounding.

\paragraph{Support failure.}
Let $H$ be constant and let $\cA=\{0,1\}$. The behavior policy always selects
$A=0$. Define two environments by
\[
\begin{array}{c|cc}
 & Y(0) & Y(1)\\
\hline
\cM_1 & 0 & 0\\
\cM_2 & 0 & 1.
\end{array}
\]
Both environments produce
\[
    P(A=0,Y=0)=1
\]
under the behavior policy, so their observational distributions are
identical. Under $\doop(A=1)$,
\[
    P_{\cM_1}(Y=1\mid\doop(A=1))=0,
\]
whereas
\[
    P_{\cM_2}(Y=1\mid\doop(A=1))=1.
\]

\paragraph{Hidden confounding under positive action support.}
Let $H$ be constant and let
\[
    Z\sim\operatorname{Bernoulli}(1/2)
\]
be observed by the demonstrator but omitted from $H$. Let the demonstrator
choose
\[
    A=Z.
\]
Both actions have positive observational probability.

In environment $\cM_1$, define
\[
    Y=A.
\]
In environment $\cM_2$, define
\[
    Y=Z.
\]
Under the demonstration mechanism, both environments produce
\[
    P(A=0,Y=0)
    =
    \frac12,
    \qquad
    P(A=1,Y=1)
    =
    \frac12.
\]
Their observational distributions are therefore identical. Under
$\doop(A=1)$,
\[
    P_{\cM_1}(Y=1\mid\doop(A=1))
    =
    1,
\]
whereas
\[
    P_{\cM_2}(Y=1\mid\doop(A=1))
    =
    P(Z=1)
    =
    \frac12.
\]
Positive observational action support is therefore insufficient when $H$
omits an action--outcome confounder.

\subsection{Proof of \cref{thm:strict-intervention}}
\label{app:proof-strict-intervention}

Consider a one-step problem with a single history and action space
$\{0,1\}$. The behavior policy always selects
\[
    A=0.
\]
Define two environments:
\[
\begin{array}{c|cc}
 & Y(0) & Y(1)\\
\hline
\cM_{+} & 1/2 & 1\\
\cM_{-} & 1/2 & 0.
\end{array}
\]
Let the common known utility be
\[
    U=Y.
\]

The observational demonstration is identical in the two environments:
\[
    A=0,
    \qquad
    Y=\frac12.
\]
Every observational learner therefore receives the same observational
distribution and utility in both environments and must produce the same
stochastic action distribution.

Let $p$ be the probability of selecting $A=1$. In $\cM_+$, the regret is
\[
    \Reg_+(p)
    =
    \frac12(1-p).
\]
In $\cM_-$, the regret is
\[
    \Reg_-(p)
    =
    \frac{p}{2}.
\]
Therefore,
\[
\begin{split}
    \max\{
        \Reg_+(p),
        \Reg_-(p)
    \}
    &=
    \frac12
    \max\{1-p,p\}\\
    &\geq
    \frac14.
\end{split}
\]

The exact behavior-cloning and world-action population optima select $A=0$
with probability one, so $p=0$. Their worst-case regret is $1/2$.

Finally, one intervention selecting $A=1$ yields
\[
    Y=1
    \quad\text{in }\cM_+,
\]
and
\[
    Y=0
    \quad\text{in }\cM_-.
\]
The intervention identifies the environment. Selecting $A=1$ in $\cM_+$ and
$A=0$ in $\cM_-$ then achieves zero regret.

% ============================================================
\section{Additional Technical Remarks}
\label{app:technical-remarks}
% ============================================================

\subsection{Point decoding}
\label{app:decoding}

Suppose that, for a fixed history,
\[
    P(Y=0,A=0)=0.6,
    \qquad
    P(Y=1,A=1)=0.4.
\]
A distributional world-action model reproduces
\[
    P(A=0)=0.6,
    \qquad
    P(A=1)=0.4.
\]
A MAP-future decoder instead selects $Y=0$ and always executes $A=0$. The
resulting controller can still be flattened into a direct deterministic
policy, but it is not equal to the distributional behavior-cloning optimum.

For continuous actions under squared loss,
\[
    \E_{Y\mid H=h}
    \left[
        \E[A\mid H=h,Y]
    \right]
    =
    \E[A\mid H=h]
\]
by the tower property. Sampling one future and applying its conditional mean
preserves equality in expectation but not necessarily equality of the full
action distribution.

\subsection{Privileged demonstrator information}

The hidden-confounding construction in
\cref{app:proof-nonidentification} also shows why matching
$p_\mu(A\mid H)$ need not reproduce the original demonstration trajectory
distribution when behavior sufficiency fails.

In that construction,
\[
    \pimu(A=1\mid H)
    =
    \frac12,
\]
but the demonstrated action satisfies $A=Z$. Deploying an independent
Bernoulli action with the same marginal does not reproduce the dependence
between $A$ and the omitted variable $Z$. Including $Z$ in the history,
\[
    H'=(H,Z),
\]
restores the information used by the demonstrator.

\subsection{Action chunks and language conditioning}

If
\[
    A=A_{t:t+K-1}
\]
is an action chunk and
\[
    Y=Y_{t+1:t+K}
\]
is the corresponding future chunk, the same marginalization identity holds:
\[
\begin{split}
    &\int
    p_\mu(
        y_{t+1:t+K}
        \mid
        h_t
    )
    p_\mu(
        a_{t:t+K-1}
        \mid
        h_t,y_{t+1:t+K}
    )
    \,dy\\
    &=
    p_\mu(
        a_{t:t+K-1}
        \mid
        h_t
    ).
\end{split}
\]

Goals and language instructions are handled by including $G$ in the history.
The results then apply conditionally on $G$, provided that the compared
policies receive the same instruction and context at deployment.

\subsection{Restricted parametric classes}

Let $\Pi_{\mathrm{A}}^{\mathrm{param}}$ be a restricted direct policy class and
let
\[
    \Pi_{\mathrm{WA}}^{\mathrm{param}}
    =
    \Pi_{\mathrm{WA}}(\cF,\cI).
\]
The flattening theorem guarantees
\[
    \Pi_{\mathrm{WA}}^{\mathrm{param}}
    \subseteq
    \Kern(\cA\mid\cH),
\]
but not necessarily
\[
    \Pi_{\mathrm{WA}}^{\mathrm{param}}
    \subseteq
    \Pi_{\mathrm{A}}^{\mathrm{param}}.
\]

A restricted direct class reproduces the world-action class only if it is
closed under the required marginalization:
\[
    q_F\in\cF,\ q_I\in\cI
    \quad\Longrightarrow\quad
    \left[
        h\mapsto
        \int
        q_F(y\mid h)q_I(\cdot\mid h,y)\,dy
    \right]
    \in
    \Pi_{\mathrm{A}}^{\mathrm{param}}.
\]
Finite architectures may therefore exhibit a genuine representational or
computational advantage from the world-action factorization even though no
such advantage exists relative to unrestricted stochastic policies.

\subsection{Observational recovery and model exploitation}

Even when
\[
    p_\mu(Y\mid H,A)
    =
    P(Y\mid H,\doop(A))
\]
on the behavior support, an optimized policy may select actions or reach
histories outside that support. Its performance then depends on extrapolation
of the learned model. Causal identification on the observed support and
accuracy under the optimized policy are different requirements.

This distinction motivates support constraints, pessimistic or robust
planning, uncertainty-aware action selection, and online model correction.

% ============================================================
% Bibliography
% ============================================================

\bibliographystyle{plainnat}
\bibliography{ref}

\end{document}